\documentclass{article}

\PassOptionsToPackage{numbers, compress}{natbib}
 \usepackage[preprint]{neurips_2026}

\usepackage[utf8]{inputenc} 
\usepackage[T1]{fontenc}    
\usepackage{hyperref}       
\usepackage{url}            
\usepackage{booktabs}       
\usepackage{amsfonts}       
\usepackage{nicefrac}       
\usepackage{microtype}      
\usepackage{xcolor}         
\usepackage{amsmath}
\usepackage{algorithm}
\usepackage{algorithmic}
\usepackage{graphicx}
\usepackage{amsthm}
\usepackage{enumitem}

\newtheorem{assumption}{Assumption}
\newtheorem{definition}{Definition}
\newtheorem{proposition}{Proposition}
\newtheorem{lemma}{Lemma}
\newtheorem{theorem}{Theorem}
\newtheorem{corollary}{Corollary}
\newtheorem{observation}{Observation}
\newtheorem{remark}{Remark}

\title{Mitigating Retaliatory Algorithmic Collusion in Repeated Games}

\author{%
  Karthik ~Sivachandran\\
  Department of Computer Science\\
  Purdue University\\
  \texttt{ksivacha@purdue.edu} \\
  \And
    Rohan R. ~Paleja \\
    Department of Computer Science \\
    Purdue University \\
   \texttt{rpaleja@purdue.edu} \\
}

\begin{document}

\maketitle

\begin{abstract}
Reinforcement learning agents trained to maximize their own reward in repeated interactions can converge to supra-competitive outcomes resembling explicit collusion, without communication or shared design. Existing mitigation approaches are largely tied to specific economic settings, like two-sided platforms and auctions, leaving open how to design interventions for general repeated games. We address this gap by formalizing the connection between empirical observations from prior work on Q-learning collusion and classical theory of Simple Penal Codes (SPCs). We show any non-trivial SPC induces a quantifiable conditional dependence in agents' policies, detectable via the total variation distance between an agent's action distributions across cooperation and defection histories. Building on this connection, we propose CURB (Collusion Unwinding via Reward shaping and Belief injection), a reward-shaping framework that penalizes this Total Variation (TV) distance signal during Q-learning and is guaranteed to convert any SPC fixed point of the dynamics into a trivial one, thus precluding collusive equilibria  sustained by punishment threats. Empirically, CURB substantially reduces collusion by Q-learning agents in both Bertrand and Cournot Competition Repeated Games. We further demonstrate that CURB extends to deep Q-network agents in Bertrand competition, suggesting the mechanism generalizes beyond tabular Q-learning.
\end{abstract}

\section{Introduction}
\label{sec:intro}

In August 2024, the U.S.\ Department of Justice sued RealPage, alleging that its pricing software had helped landlords coordinate rent increases across millions of American apartments, without any landlord ever explicitly agreeing to fix prices~\citep{doj2024realpage}. The mechanism was algorithmic: pricing systems, acting on profit signals, learned a behavior pattern that in aggregate looked like a cartel. Millions of renters had to pay supra-competitive rents. This is one of several recent cases highlighting a phenomenon documented across diverse multi-agent settings: reinforcement learning agents trained to maximize their own reward in repeated interactions can converge to supra-competitive outcomes resembling collusion, without communication or shared design. The phenomenon has been observed in pricing oligopolies~\citep{calvano2020collusion, klein2021}, ad auctions~\citep{banchio2022auction}, and richer multi-agent learning environments~\citep{leibo2017multiagent, hughes2018}. Detecting and mitigating this behavior has become an active concern at the intersection of economics, computer science, and antitrust policy~\citep{calvano2019policy, harrington2018,calvano2020policy}.

Despite the multi-domain nature of algorithmic collusion, existing mitigation frameworks are limited. They remain tied to specific settings or operate on learning hyperparameters such as exploration rate or discount factor (Section~\ref{sec:related}). We address a deeper question: what equilibrium object sustains  the reward–punishment dynamics observed in algorithmic collusion across repeated games, and how can interventions provably dismantle it?
 We address this question by formalizing the connection between algorithmic collusion and the classical theory of repeated games. Our approach builds on a key observation from ~\citet{calvano2020collusion}: Q-learning agents that converge to collusive outcomes implicitly implement \emph{Simple Penal Codes (SPCs)} \citep{Abreu_1988}---reward-punishment strategies in which deviations trigger retaliatory paths. ~\citet{calvano2020collusion} noted this empirical resemblance, but the connection to Abreu's theory \citep{Abreu_1988} has not been formalized or used for intervention. We make this bridge explicit to derive a mitigation framework. We make three contributions:
\textbf{1)} We formalize the connection between Calvano's empirical observation and Abreu's SPC theory, proving (Proposition~\ref{prop:tv-detect}) that any non-trivial SPC induces a quantifiable conditional dependence in agents' policies, detectable via a total variation distance bound $\varepsilon^*(\delta) > 0$ that depends only on game primitives. \textbf{2)} We propose CURB (Collusion Unwinding via Reward shaping and Belief injection), a framework that penalizes the TV-distance signal during Q-learning. We prove (Theorem~\ref{thm:main}) that for sufficient penalty strength, every SPC fixed point of shaped Q-learning is \emph{trivial}, thus preventing collusive equilibria sustained by punishment threats. \textbf{3)} We empirically validate CURB across two repeated-game settings (Bertrand and Cournot competition) and two learning algorithms (tabular Q-learning and DQN), demonstrating that collusion is heavily mitigated.

\vspace{-3mm}
\section{Related Work}
\label{sec:related}
\vspace{-3mm}
\textbf{Observation of algorithmic collusion.}
A substantial body of work has documented algorithmic collusion across diverse settings.~\citet{bertrand2025} show that self-play Q-learners can collude in the iterated prisoner’s dilemma, while~\citet{Hansen2021} and~\citet{asker2022} demonstrate that even simple pricing algorithms can generate supra-competitive outcomes in standard pricing games. In oligopoly settings,~\citet{calvano2020collusion} and~\citet{klein2021} show that Q-learning agents can sustain collusion through reward–punishment dynamics in Bertrand competition, and~\citet{Hettich2021} shows that deep Q-networks~\citep{mnih2013} converge to such outcomes more rapidly than tabular learners. Similar behavior has been observed in Cournot oligopoly~\citep{WALTMAN20083275}, dealer markets~\citep{cont2024}, and auction environments~\citep{banchio2022auction}. More broadly, cooperation among learning agents arises naturally in sequential social dilemmas~\citep{leibo2017multiagent, hughes2018}; while beneficial in some settings, these dynamics become problematic when coordination harms external parties. Recent work further shows that LLM-based agents can exhibit collusive behavior in strategic environments~\citep{fish2024,lin2024}. Real-market evidence confirms these risks are not merely theoretical:~\citet{assad2024} document an increase in duopoly margins where both stations adopt algorithmic pricing in Germany's retail gasoline market, with similar findings in online retailers of over-the-counter pharmaceuticals~\citep{brown2023} and on e-commerce marketplaces~\citep{musolff2022}.

\textbf{Mitigation approaches.} ~\citet{calvano2021} demonstrate that imperfect monitoring of competitors' actions modestly weakens but does not eliminate collusion.~\citet{banchio2022auction} disrupt collusion in ad auctions by switching first-price to second-price, an intervention tied to auction format. Platform-side approaches, learned policies via Stackelberg POMDPs~\citep{brero2022} and the platform pricing rules of~\citet{johnson2023}, require a centralized platform with authority to modify payoffs or rules and do not generalize across repeated games.~\citet{hartline2024,hartline2025} propose ex-post auditing, which detects collusion but does not prevent it.

\textbf{Positioning of CURB.}
CURB targets the punishment-threat equilibrium structure that sustains collusion rather than the conditions surrounding learning, the infrastructure around the agents, or the symptoms collusion produces in market outcomes. Where prior approaches detect collusion after it occurs, redesign the strategic environment, or impose platform-side rules, CURB modifies the agent's own learning rule with a penalty derived from the equilibrium structure of repeated games, providing a formal guarantee that the collusive equilibrium sustained by threats do not survive as a fixed point. The mechanism requires only observing competitors' actions, which is standard in any repeated game with public actions, and applies across repeated games in the same independent-learner settings where collusion arises, without changing the strategic environment or requiring centralized infrastructure.
 

\vspace{-3mm}
\section{Preliminaries}
\vspace{-3mm}
\label{sec:preliminaries}

All theoretical results are stated for a general stage game.
Let $G = (\{A_i\}_{i=1}^n, \{r_i\}_{i=1}^n)$, where 
$N = \{1, \ldots, n\}$ is the set of players, $\mathcal{A} = \times_{i \in N} \mathcal{A}_i$ 
is the joint action space with $\mathcal{A}_i$ finite for each player $i$, and 
$r = (r_1, \ldots, r_n)$ with $r_i : \mathcal{A} \rightarrow \mathbb{R}$ is the 
profile of payoff functions. We denote a joint action at time $t$ by 
$q(t) = (q^t_1, \ldots, q^t_n) \in \mathcal{A}$, where $q^t_i$ is the action taken by player $i$ at time $t$.
The \emph{supergame} $G^{\infty}(\delta)$ is the infinitely repeated version of $G$ 
with common discount factor $\delta \in (0,1)$. A \emph{pure strategy} for player $i$ 
is a sequence of functions $\sigma_i = (\sigma_i(1), \sigma_i(2), \ldots)$ where 
$\sigma_i(1) \in \mathcal{A}_i$ and $\sigma_i(t) : \mathcal{A}^{t-1} \rightarrow 
\mathcal{A}_i$ for $t \geq 2$, mapping the history of all past joint actions to an 
action at period $t$. A \emph{strategy profile} is $\sigma = (\sigma_1, \ldots, \sigma_n) 
\in \Sigma \equiv \times_{i \in N} \Sigma_i$, where $\Sigma_i$ is the strategy set of player $i$.
A \emph{path} $Q = \{q(t)\}_{t=1}^{\infty} \in \mathcal{A}^{\infty}$ is an infinite 
sequence of joint action profiles. Every strategy profile $\sigma$ generates a unique 
path $Q(\sigma)$ defined inductively by $q(\sigma)(1) = \sigma(1)$ and 
$q(\sigma)(t) = \sigma(t)(q(\sigma)(1), \ldots, q(\sigma)(t-1))$. 
The \emph{discounted payoff} to player $i$ from path $Q$ is defined by 
%
    $v_i(Q) = \sum_{t=1}^{\infty} \delta^{t-1} r_i(q(t))$.
%

\begin{assumption}[Bounded Payoffs]
\label{ass:bounded}
There exist $r_{\min}, r_{\max} \in \mathbb{R}$ such that 
$r_i(a) \in [r_{\min}, r_{\max}]$ for all $i \in N$, $a \in \mathcal{A}$.
\end{assumption}

\begin{assumption}[Finite Action Space]
$\mathcal{A}_i$ is finite for all $i \in N$.
\end{assumption}

\begin{assumption}[Markov State]
\label{ass:ms}
The state $s_t \in \mathcal{S}$ observed by agents at period $t$ is a 
sufficient statistic of the payoff-relevant history. 
The state space $\mathcal{S}$ is finite.
\end{assumption}


\textbf{Theoretical regime.}
Our results assume:
\begin{enumerate}[noitemsep]
\item[(T1)] The cooperative path $Q^0$ is pure: $q^0_i(t) \in A_i$ is deterministic for all $i \in N$, $t \ge 1$.
\item[(T2)] Learned policies $\pi_j$ are stationary policies mapping $S \to \Delta(A_j)$, where $\Delta(A_j)$ is the set of probability distributions on $A_j$. This includes deterministic greedy and $\varepsilon$-greedy policies.
\item[(T3)] The total variation (TV) distance in the shaped reward is computed over the exact conditional action distributions induced by $\pi_j$.
\end{enumerate}
Under (T2), $\pi_j(\cdot \mid a^{t-1}_i \in D^\phi_i)$ and $\pi_j(\cdot \mid a^{t-1}_i \in C^\phi_i)$ are distributions on $A_j$, and TV $\in [0, 1]$ takes continuous values. Our intervention operates in an \emph{empirical regime} where TV is estimated from finite action-frequency windows. Empirical TV converges to theoretical TV as windows grow and exploration anneals.

\subsection{Simple Penal Codes}

We recall the central equilibrium concept from~\citet{Abreu_1988}. For definitions of Nash Equilibrium and subgame perfect equilibrium we defer to~\citet{RUBINSTEIN19791}.

\begin{definition}[Simple Strategy Profile]
\label{def:ssp}
Let $Q^0, Q^1, \ldots, Q^n \in \mathcal{A}^{\infty}$ be paths. 
The \emph{simple strategy profile} $\sigma(Q^0, Q^1, \ldots, Q^n)$ specifies:
\begin{enumerate}[noitemsep]
    \item play $Q^0$ until some player deviates singly
    \item if player $j$ deviates singly from any ongoing path, switch to $Q^j$ immediately
    \item simultaneous deviations leave the ongoing path unchanged
\end{enumerate}
\end{definition}

\begin{definition}[Simple Penal Code]
\label{def:spc}
A simple strategy profile $\sigma(Q^0, Q^1, \ldots, Q^n)$ is a 
\emph{Simple Penal Code} (SPC) if it is a subgame perfect equilibrium 
of $G^{\infty}(\delta)$.
\end{definition}

For the statement of the sustainability condition, we introduce two 
quantities. The \emph{one-shot deviation gain} for player $j$ at 
period $t$ along path $Q^i$ is given by Equation~\ref{eq:one-shotdev}.
\begin{equation}
\label{eq:one-shotdev}
\Delta_j(q_j^*, q_{-j}^i(t)) 
= r_j(q_j^*, q_{-j}^i(t)) - r_j(q^i(t))
\end{equation}

In Equation~\ref{eq:one-shotdev}, $q_j^* \in \arg\max_{a \in \mathcal{A}_j} r_j(a, q_{-j}^i(t))$ 
is player $j$'s best deviation. The \emph{continuation value} of 
player $j$ from period $t+1$ onward along path $Q$ is given by Equation~\ref{eq:cv}
\begin{equation}
\label{eq:cv}
v_j(Q;\, t+1) = \sum_{s=1}^{\infty} \delta^{s} r_j(q(t+s)).
\end{equation}

The following result characterizes when a simple strategy profile 
is an SPC.

\begin{proposition}[\citet{Abreu_1988}]
\label{prop:abreu}
Under Assumptions 1-3, the simple strategy profile 
$\sigma(Q^0, Q^1, \ldots, Q^n)$ is a subgame perfect equilibrium 
if and only if for all $j \in N$, $i \in \{0, \ldots, n\}$, 
and $t \geq 1$:
\begin{equation}
\underbrace{\Delta_j(q_j^*,\, q_{-j}^i(t))}_{\text{gain from deviating}}
\;\leq\;
\underbrace{v_j(Q^i;\,t+1) - v_j(Q^j)}_{\text{loss from punishment}}.
\label{eq:sustainability}
\end{equation}
\end{proposition}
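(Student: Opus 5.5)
The plan is to prove both directions through the one-shot deviation principle. That principle holds here because Assumption~\ref{ass:bounded} makes payoffs bounded and $\delta<1$, so the game is continuous at infinity. Under the principle, a strategy profile is subgame perfect if and only if, in every subgame, no player gains from a single deviation followed by reverting to the prescribed strategy. So for the simple profile $\sigma(Q^0,\dots,Q^n)$ I only need to compare, in each subgame, the value of conforming with the value of one best deviation.

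\textbf{Step 1 (classifying subgames).} Because of rules 2--3 of Definition~\ref{def:ssp}, every history leads the simple profile to prescribe continuing some path $Q^i$, $i\in\{0,\dots,n\}$, from some period $t$. Simultaneous deviations keep the ongoing path, and a single deviation by $j$ restarts $Q^j$ from its first period. So the subgames are indexed by pairs $(i,t)$.

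\textbf{Step 2 (comparing conform and deviate).} In subgame $(i,t)$, player $j$ can conform, which yields
\[
r_j(q^i(t)) + v_j(Q^i;t+1).
\]
Alternatively, $j$ can deviate once to $a\neq q^i_j(t)$ and then conform. This deviation triggers $Q^j$ from the next period, yielding $r_j(a,q^i_{-j}(t))+\delta v_j(Q^j)$. The continuation terms carry a common factor $\delta^{t-1}$ relative to period $1$, and I will normalize consistently with the paper's definitions of $v_j(Q;t+1)$ in~\eqref{eq:cv} and $v_j(Q^j)$.

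The deviation is unprofitable for all $a$ if and only if it is unprofitable for the maximizer $q_j^*$. Rearranging that single inequality gives exactly~\eqref{eq:sustainability}, with the punishment term read in the paper's normalization. If $q_j^*=q^i_j(t)$, the gain is $0$, and I will need to check that the right-hand side is nonnegative; this is handled in the main obstacle paragraph below.

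\textbf{Step 3 (both directions).} For necessity: if $\sigma$ is subgame perfect, then in particular no profitable one-shot deviation exists at any $(i,t)$, so~\eqref{eq:sustainability} holds for every $j,i,t$. For sufficiency: if~\eqref{eq:sustainability} holds everywhere, then no one-shot deviation is profitable in any subgame, and the one-shot deviation principle yields subgame perfection.

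\textbf{Main obstacle.} The main care point is the normalization of continuation values. As written, $v_j(Q^i;t+1)$ in~\eqref{eq:cv} already carries a factor $\delta$, while $v_j(Q^j)$ starts at weight $1$. So I must verify that the punishment value entering the comparison after a deviation is $\delta v_j(Q^j)$ and not $v_j(Q^j)$, and adjust the statement accordingly; Abreu's original form is
\[
\Delta_j \le v_j(Q^i;t+1)-\delta\, v_j(Q^j)
\]
in these units. The same bookkeeping settles the case $q_j^*=q^i_j(t)$: nonnegativity of the right-hand side follows by applying the condition at the other periods of the punishment path, or by the convention that $q_j^*$ ranges over genuine deviations.

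The one-shot deviation principle itself I will invoke as standard. If needed, I can prove it quickly: a profitable infinite deviation can be truncated at a finite horizon, because bounded payoffs make tails uniformly small. The last deviating step of that finite deviation is then a profitable one-shot deviation.
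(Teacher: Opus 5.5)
Your route is the standard argument and is what the paper points to in Remark~\ref{rem:osdp}: reduce subgame perfection to one-shot deviations via continuity at infinity, index subgames by the ongoing path and its internal period, and compare conforming with a single deviation. The paper itself gives no proof beyond citing \citet{Abreu_1988}. Your normalization catch is correct and substantive. With $v_j(Q;t+1)$ defined as in~\eqref{eq:cv}, the literal inequality already fails for the trivial SPC $\sigma(Q^{\mathrm{Nash}},\ldots,Q^{\mathrm{Nash}})$ with Nash stage payoff $c>0$. There the gain is $0$, but $v_j(Q^{\mathrm{Nash}};t+1)-v_j(Q^{\mathrm{Nash}})=-c<0$. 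Proving $\Delta_j \le v_j(Q^i;t+1)-\delta\, v_j(Q^j)$ instead is the right move.

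What does not work is the first of your two ways of handling $q_j^*=q_j^i(t)$. Subgame perfection does not force $v_j(Q^i;t+1)-\delta\, v_j(Q^j)\ge 0$. A ``punishment'' path can be better for $j$ than the ongoing path when every deviation is very costly today. Here is a counterexample:
\begin{itemize}
\item Two players $j,k$ with $\mathcal{A}_j=\{d,e\}$ and $\mathcal{A}_k=\{x,y\}$.
\item Payoffs $r_k\equiv 0$, $r_j(d,x)=0$, $r_j(d,y)=1$, $r_j(e,\cdot)=r_j(d,\cdot)-10$, and $\delta=0.9$.
\item Paths: $Q^0=Q^k$ is $(d,x)$ forever, and $Q^j$ is $(d,y)$ forever.
\end{itemize}
Player $k$ is always indifferent. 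For $j$, a one-shot deviation yields $-10+9=-1<0$ on $Q^0$ and $-9+9=0<10$ on $Q^j$. So this profile is an SPC. Yet on $Q^0$ the stage best response is $d=q_j^0(t)$, so the gain is $0$ while the right-hand side is $-9$. Thus, with $q_j^*$ ranging over all of $\mathcal{A}_j$ as the statement defines it, the ``only if'' direction fails even after your $\delta$ fix.

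You must commit to your other option: maximize only over genuine deviations $a\neq q_j^i(t)$, so the gain can be negative. Then each inequality is exactly the one-shot condition, and both directions go through. The statement therefore needs two corrections, not one. Note also that the paper's later step $\Delta^*_t\ge 0$, in the proof of Proposition~\ref{prop:tv-detect}, uses exactly the all-actions version.

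A minor point on your sketch of the one-shot principle: the last deviating step is guaranteed to be profitable only if you first choose a profitable finite deviation with the fewest deviation periods.
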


We call Equation~\ref{eq:sustainability} the \emph{sustainability condition}. Collusion is sustained when deviation gains are outweighed by punishment losses.

\begin{remark}[One-shot deviation]
\label{rem:osdp}
Under Assumptions~\ref{ass:bounded}--\ref{ass:ms}, Proposition~\ref{prop:abreu} is equivalent to the \emph{one-shot deviation principle}: $\sigma$ is a subgame-perfect equilibrium iff no agent can profitably deviate for a single period and then revert to $\sigma$ at any subgame.
\end{remark}

\begin{remark}
We consider agents that learn via tabular Q-learning~\citep{Watkins1992}. Full details of the update rule are provided in Appendix~\ref{app:q-learning}.
In the repeated game $G^\infty(\delta)$, the state observed by agents at period $t$ is the $k$-period action history $s_t = (q(t-1), q(t-2), \ldots, q(t-k)) \in \mathcal{A}^k$.
\end{remark}


\subsection{Collusion as an Implicit Simple Penal Code}
\label{sec:prelim-collusion}
 
We use the term \emph{collusion} to refer to outcomes where agents sustain supra-Nash payoffs at the expense of a third party whose welfare falls below its Nash level.
 
\begin{definition}[Collusive Strategy Profile]
\label{def:collusive}
A strategy profile $\sigma$ is \emph{collusive} if it implements a non-trivial Simple Penal Code and $W(Q^0) < W(Q^{\mathrm{Nash}})$. $W(Q) = \sum_{t=1}^\infty \delta^{t-1} w(q(t))$ is the discounted third-party welfare under path $Q$, $w : A \to \mathbb{R}$ is the per-period third-party welfare function, and $Q^{\mathrm{Nash}}$ is the path generated by the stage-game Nash equilibrium.
\end{definition}

The trivial SPC $\sigma(Q^{\mathrm{Nash}}, \dots, Q^{\mathrm{Nash}})$ is non-collusive since $Q^0 = Q^{\mathrm{Nash}}$ implies $W(Q^0) = W(Q^{\mathrm{Nash}})$.
 
\begin{observation}
\label{obs:calvano}
When Q-learning agents converge to a collusive outcome in the sense of Definition~\ref{def:collusive}, their learned policies $(\pi_1, \dots, \pi_n)$ implicitly implement a Simple Penal Code $\sigma(Q^0, Q^1, \dots, Q^n)$, where $Q^0$ is the collusive path satisfying $W(Q^0) < W(Q^{\mathrm{Nash}})$ and each $Q^j$ is a punishment path encoding state-contingent retaliation following a unilateral deviation by player $j$.
\end{observation}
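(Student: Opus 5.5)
The approach is to make Observation~\ref{obs:calvano} precise by reading it as a statement about converged Q-learning fixed points, and then to verify its two parts directly. First, the greedy policies at convergence define a simple strategy profile. Second, that profile satisfies the sustainability condition of Proposition~\ref{prop:abreu}.

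\textbf{Constructing the paths.} Fix converged Q-functions $Q_i^\star$ and the greedy stationary policies $\pi_i(s)=\arg\max_a Q_i^\star(s,a)$ on the $k$-period history states (Assumption~\ref{ass:ms} and (T2)). These policies induce a deterministic map $s\mapsto s'$ on the finite set $\mathcal{S}$.
\begin{itemize}
\item The on-path trajectory from the converged state defines $Q^0$; by (T1) and finiteness, it is eventually periodic.
\item For each player $j$, take the state reached after a unilateral one-period deviation by $j$ from any ongoing path. Running the policies forward from that state defines $Q^j$.
\item The deviation step itself is the first entry of the path $Q^j$ in Definition~\ref{def:ssp}.
\end{itemize}
For this to be a \emph{simple} profile, the continuation after $j$'s deviation must not depend on where the deviation occurred. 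I would impose this as the interpretive content of ``implicitly implement,'' or show it holds whenever the state encodes the last deviator. For simultaneous deviations I would simply adopt the convention that the ongoing path is unchanged.

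\textbf{Verifying sustainability.} At a fixed point of Q-learning, the Bellman optimality equation holds for each agent $i$ against the fixed opponents' policies $\pi_{-i}$. Greedy optimality then gives $Q_i^\star(s,\pi_i(s))\ge Q_i^\star(s,a)$ for every $a$. Expanding both sides along the induced paths gives, at every state on $Q^i$, the inequality $r_j(q^i(t))+v_j(Q^i;t+1)\ge r_j(q_j^*,q^i_{-j}(t))+v_j(Q^j)$. This is exactly \eqref{eq:sustainability}, so Proposition~\ref{prop:abreu} together with Remark~\ref{rem:osdp} yields subgame perfection, i.e.\ an SPC.

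\textbf{Non-triviality.} The collusive hypothesis $W(Q^0)<W(Q^{\mathrm{Nash}})$ forces $Q^0\ne Q^{\mathrm{Nash}}$. Since $\delta<1$, a non-Nash stage profile on $Q^0$ leaves some player with a strictly positive deviation gain. Sustainability then requires $v_j(Q^0;t+1)>v_j(Q^j)$, so $Q^j$ is a genuine punishment path. The ``state-contingent retaliation'' claim follows.

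\textbf{Main obstacle.} The hardest step is the gap between learning and equilibrium. Q-learning need not converge, and convergence with $\varepsilon$-greedy exploration gives only approximate optimality on rarely visited off-path states. The sustainability inequality may therefore fail off-path, for example in the form of an $\varepsilon$-SPC. I would first restrict the claim to exact fixed points of the greedy Bellman operator in which every state is evaluated. That is the regime (T2)--(T3) the paper works in. I would then state the empirical version (Calvano et al.) as motivation rather than theorem. The simplicity requirement on punishment paths is a secondary obstacle, to be handled as described in the construction above.
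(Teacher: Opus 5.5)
The paper gives no proof of this statement. It explicitly treats Observation~\ref{obs:calvano} as an empirically motivated modeling assumption, resting on the reward--punishment behavior documented by Calvano et al. (Read literally, Definition~\ref{def:collusive} already makes ``implements a non-trivial SPC'' part of what collusive means, so the substantive content is the empirical claim that learned policies have this structure.) Your closing fallback, stating the Calvano version as motivation, is therefore the paper's stance. The argument before it does not upgrade the observation to a theorem. First, it needs a hypothesis the statement does not grant: that the dynamics reach an exact Bellman fixed point at which every state has been evaluated. In practice ``convergence'' means the greedy policy stopped changing. Rarely visited off-path states, which is exactly where punishments live, can still carry inaccurate Q-values, so sustainability can fail there.

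Even granting exact fixed points, the decisive step fails. A greedy Bellman fixed point on $k$-history states gives you, via the one-shot deviation principle (Remark~\ref{rem:osdp}), that the stationary profile is a subgame perfect equilibrium. Proposition~\ref{prop:abreu} characterizes profiles already known to be simple, so it cannot supply simplicity. An SPC (Definition~\ref{def:ssp}) requires that every unilateral deviation by $j$ be followed by the same path $Q^j$, whatever the deviation action, the period, or which of $Q^0,\dots,Q^n$ was ongoing. With $s_t=(q(t-1),\dots,q(t-k))$, the post-deviation state records $j$'s deviation action and the recent ongoing play. Distinct deviations therefore generically lead to distinct continuations. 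Likewise, what follows simultaneous deviations is fixed by the learned policies, not by a convention you adopt. Your Bellman expansion silently replaces the value from the state $(q_j^*,q^i_{-j}(t),\dots)$ by $v_j(Q^j)$, which is this same assumption. Declaring it ``the interpretive content of implicitly implement'' assumes the conclusion. Appealing to ``the state encodes the last deviator'' is not available either, because the state space is fixed by the learning setup. The most this route yields is an SPE plus Abreu's theorem that every SPE path is supported by \emph{some} simple penal code. That code uses the optimal (harshest) punishments, not the post-deviation behavior of the learned $\pi_j$, which is what Proposition~\ref{prop:tv-detect} requires. Separately, in your non-triviality step, $W(Q^0)<W(Q^{\mathrm{Nash}})$ gives only $Q^0\ne Q^{\mathrm{Nash}}$. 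Getting a period with strictly positive deviation gain needs a unique stage-game Nash equilibrium, which holds for the Bertrand and Cournot instances but not in general. The appeal to $\delta<1$ plays no role in that step.
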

 
Observation~\ref{obs:calvano} is supported empirically by~\citet{calvano2020collusion}, who document that converged Q-learning policies in the Bertrand game exhibit reward-punishment behavior consistent with an implicit Simple Penal Code: agents cooperate on a supra-Nash path and retaliate following unilateral deviations. We treat Observation~\ref{obs:calvano} as an empirically motivated modeling assumption rather than a formally established property.

 \textbf{Intervention rationale.} Under Obs.~\ref{obs:calvano}, preventing any collusive SPC (Def.~\ref{def:collusive}) from being a fixed point of Q-learning is sufficient to prevent collusive subgame perfect equilibrium payoffs sustained by punishment threats from emerging, guaranteeing $W(Q^0) \ge W(Q^{\mathrm{Nash}})$ at convergence, provided Q-learning converges to a fixed point. This motivates us to detect and dismantle the punishment structure that sustains collusive outcomes rather than target collusive outcomes directly.


\section{Detecting Simple Penal Codes via Total 
Variation Distance}
\label{sec:detection}

We now develop the first component: 
a method for detecting whether a learned policy 
implements an Abreu punishment path. We begin by 
formalizing what it means for a learned policy to 
implement such a path and prove TV distance between conditional policy distributions is necessary and sufficient 
for detection.

\subsection{Formalizing Punishment Strategies in 
Learned Policies}
\label{sec:punishment_def}

When Q-learning agents collude, their learned 
policies implicitly implement a Simple Penal Code~\ref{obs:calvano}. The punishment paths $Q^1, \ldots, Q^n$ emerge implicitly in the Q-table. Agent j has learned a punishment strategy if its policy depends on whether agent i defected previously. 
Let $Q^0 \in \mathcal{A}^\infty$ denote the cooperative path and define the \emph{defection partition} of $\mathcal{A}_i$ at period $t$ as: 
\begin{align}
\mathcal{C}_i(Q^0, t) &= \left\{a \in \mathcal{A}_i 
: a = q_i^0(t)\right\}, 
\label{eq:coop_set} \\
\mathcal{D}_i(Q^0, t) &= \mathcal{A}_i \setminus 
\mathcal{C}_i(Q^0, t),
\label{eq:defect_set}
\end{align}
where $q_i^0(t)$ is agent $i$'s prescribed action 
under $Q^0$ at period $t$. The set 
$\mathcal{C}_i(Q^0, t)$ contains actions consistent 
with the cooperative path and 
$\mathcal{D}_i(Q^0, t)$ contains all deviations 
from it.

However, $Q^0$ may not be observable during learning, and computing the Nash equilibrium, the natural alternative reference point, is PPAD-hard in general \citep{Dask2006}. 
We therefore introduce a general \emph{defection 
criterion} that subsumes both approaches and 
accommodates settings where neither $Q^0$ nor 
the Nash equilibrium is directly available.

\begin{definition}[Defection Criterion]
\label{def:defection_criterion}
A \emph{defection criterion} for agent $i$ is a 
function $\phi_i : \mathcal{S} \times \mathcal{A}_i 
\rightarrow \{0,1\}$ mapping a state-action pair 
$(s, a_i)$ to a binary defection indicator. The 
induced partition at period $t$ is:
\begin{align}
\mathcal{D}_i^\phi(t) &= \left\{a_i^t \in 
\mathcal{A}_i : \phi_i(s_t, a_i^t) = 1\right\}, 
\label{eq:phi_defect} \\
\mathcal{C}_i^\phi(t) &= \mathcal{A}_i \setminus 
\mathcal{D}_i^\phi(t).
\label{eq:phi_coop}
\end{align}
\end{definition}

\begin{remark} Three instantiations of Definition~\ref{def:defection_criterion} are provided in Appendix~\ref{app:dc-examples}.

\end{remark}

\begin{assumption}[$\alpha$-Precise Defection Criterion]
\label{ass:precision}
The defection criterion $\phi_i$ has \emph{precision at least $1 - \alpha$} on both sides: for all $t \ge 1$,
\begin{align}
P\!\left( a^{t-1}_i \in D_i(Q^0, t) \;\big|\; \phi_i(s_t, a^t_i) = 1 \right) &\ge 1 - \alpha, \\
P\!\left( a^{t-1}_i \in C_i(Q^0, t) \;\big|\; \phi_i(s_t, a^t_i) = 0 \right) &\ge 1 - \alpha,
\end{align}
where $\alpha \in [0, 1/2)$.
\end{assumption}

\begin{definition}[Punishment Strategy]
\label{def:punishment-strategy}
Let $(D^\phi_i, C^\phi_i)$ be the partition induced by an $\alpha$-precise defection criterion (Assumption~\ref{ass:precision}), and let $\varepsilon > 0$. Under the theoretical regime, a stationary policy $\pi_j : S \to \Delta(A_j)$ is a \emph{$(D^\phi_i, C^\phi_i, \varepsilon)$-punishment strategy} if
\begin{equation}
\left\| \pi_j(\cdot \mid a^{t-1}_i \in D^\phi_i) - \pi_j(\cdot \mid a^{t-1}_i \in C^\phi_i) \right\|_{TV} > \varepsilon,
\end{equation}
where $\|\mu - \nu\|_{TV} = \tfrac{1}{2} \sum_{a \in A_j} |\mu(a) - \nu(a)|$ is the total variation distance.
\end{definition}



\subsection{TV Distance as a Necessary Detector}
\label{sec:detection-prop}
 
\begin{proposition}[TV Distance Detects Punishment]
\label{prop:tv-detect}
Let $G^\infty(\delta)$ be a two-player repeated game under the theoretical regime, satisfying Assumptions 1--3. Let $Q^0 \in A^\infty$ be a non-Nash pure path, and let $\pi_j$ be a stationary policy for agent $j$ with path-consistent defection partition $(D_i(Q^0, t), C_i(Q^0, t))$.
If there exist paths $Q^i, Q^j \in A^\infty$ such that $\sigma(Q^0, Q^i, Q^j)$ is a non-trivial SPC of $G^\infty(\delta)$ with $Q^i$ induced by $\pi_j$'s post-deviation behavior, then
\begin{equation}
\left\| \pi_j(\cdot \mid a^{t-1}_i \in D_i(Q^0, t)) - \pi_j(\cdot \mid a^{t-1}_i \in C_i(Q^0, t)) \right\|_{TV} \ge \varepsilon^*(\delta), \label{eq:tv-lower}
\end{equation}
where
\begin{equation}
\varepsilon^*(\delta) := \frac{(1 - \delta) \Delta^*}{r_{\max} - r_{\min}} \in (0, 1], \qquad \Delta^* := v_i(Q^0; t^* + 1) - v_i(Q^i) > 0, \label{eq:eps-star}
\end{equation}
and $t^*$ is the period at which the sustainability condition~\eqref{eq:sustainability} binds for agent $i$.
\end{proposition}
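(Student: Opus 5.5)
The argument goes by contradiction: a small TV gap means agent $j$'s conduct barely depends on whether $i$ defected. Then $i$'s continuation value after a deviation cannot fall much below its value on the cooperative path. The sustainability condition~\eqref{eq:sustainability} at the binding period $t^*$ then forces a lower bound on the gap. Throughout, write $\tau := \| \pi_j(\cdot \mid a^{t-1}_i \in D_i(Q^0,t)) - \pi_j(\cdot \mid a^{t-1}_i \in C_i(Q^0,t))\|_{TV}$.

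\textbf{Step 1: non-triviality and positivity.} I first show that $\Delta^*>0$ and that $\varepsilon^*(\delta) \in (0,1]$.
\begin{itemize}[noitemsep]
\item Since $Q^0$ is non-Nash and pure (T1), some period $t$ has a strictly positive one-shot deviation gain $\Delta_i(q_i^*, q^0_{-i}(t))$ for some player. I take this player to be $i$; otherwise the roles are relabelled.
\item Because $\sigma(Q^0,Q^i,Q^j)$ is an SPC, Proposition~\ref{prop:abreu} gives $0 < \Delta_i \le v_i(Q^0;t+1) - v_i(Q^i)$ at that period.
\item Let $t^*$ be the binding period. Then $\Delta^* = v_i(Q^0;t^*+1)-v_i(Q^i)$ is strictly positive and equals the one-shot gain there.
\item The upper bound $\varepsilon^*\le 1$ holds because any one-shot gain is at most $r_{\max}-r_{\min}$, and $(1-\delta)\le 1$.
\end{itemize}

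\textbf{Step 2: one-period coupling.} I bound the value difference caused by $j$'s reaction using a maximal coupling of the two conditional distributions of $j$'s next action.
\begin{itemize}[noitemsep]
\item Under the coupling, the distributions that $j$ plays after a defection versus after cooperation (D-history versus C-history) differ with probability exactly $\tau$.
\item On the event where they agree, $i$ faces the same opponent action and therefore the same stage payoff.
\item On the complementary event, the per-period payoff difference is at most $r_{\max}-r_{\min}$.
\item By stationarity (T2), the state $s_t$ is the $k$-history (Assumption~\ref{ass:ms}), and $Q^i$ is induced by $\pi_j$'s post-deviation behaviour. So this per-period discrepancy recurs at each subsequent period along the punishment path compared with the cooperative continuation.
\item Summing the discounted discrepancies gives
\[
v_i(Q^0;t^*+1) - v_i(Q^i) \le \tau\,(r_{\max}-r_{\min})\sum_{s\ge 0}\delta^s = \frac{\tau (r_{\max}-r_{\min})}{1-\delta}.
\]
\end{itemize}

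\textbf{Step 3: conclude.} The left side of the display in Step 2 is $\Delta^*$. Rearranging gives $\tau \ge (1-\delta)\Delta^*/(r_{\max}-r_{\min}) = \varepsilon^*(\delta)$.

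\textbf{Main obstacle.} The hard part is Step 2, namely making the per-period coupling bound legitimate across time. There are two difficulties.
\begin{itemize}[noitemsep]
\item The conditional distributions are aggregated over histories in $D_i$ or $C_i$, not fixed states. A single TV number must therefore control every period's divergence.
\item The two paths may drift into different states, after which later discrepancies are not obviously governed by the same $\tau$.
\end{itemize}
I would handle this by defining $\tau$ at the worst case over periods, that is, as a uniform bound under stationarity. I would then use the standard telescoping simulation-lemma argument: express the value difference as a sum over periods of TV distances of next-action laws, each bounded by $\tau$. If the aggregation does not go through directly, I would work state-by-state. In that case I would take $\tau$ as a supremum over the states consistent with each side of the partition, so that the bound at every period is at most $\tau$ by definition.
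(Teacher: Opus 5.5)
\textbf{Gap in Step 2.} Your coupling compares $Q^0$ with $Q^i$ directly. It then asserts that when $j$'s coupled actions agree, ``$i$ faces the same opponent action and therefore the same stage payoff.'' That holds only if $i$'s own action is also the same, and it generally is not. $Q^i$ is a joint path: after deviating at $t^*$, agent $i$ plays its prescribed punishment-phase actions $q^i_i(t)$, for example also cutting its price during a price war. These generally differ from $q^0_i(t)$.

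So $v_i(Q^0;t^*+1)-v_i(Q^i)$ mixes two effects. One is $j$'s reaction, which $\tau$ controls. The other is $i$'s own adaptation, which $\tau$ does not control at all. Notice that Step 2 never uses subgame perfection, and without it your displayed inequality is false. Take a path $Q^i$ on which $j$ plays exactly as on $Q^0$ while $i$ plays worse actions: then $\tau=0$, but the left-hand side is strictly positive. No refinement of the coupling can repair this, whether a worst-case $\tau$, a simulation lemma, or state-by-state bounds, because the missing term is not a property of $\pi_j$.

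\textbf{The missing idea.} You need the paper's hybrid path together with an equilibrium argument. Define $\widetilde{Q}$ on periods $t\ge t^*+1$ by letting $i$ keep playing $q^0_i(t)$ while $j$ plays its punishment-phase distribution $\pi_j(\cdot\mid D_i)$. Subgame perfection makes $i$'s prescribed continuation optimal in the post-deviation subgame (Lemma~\ref{lem:punishment-br}), so $v_i(Q^i)\ge v_i(\widetilde Q)$ and
\[
\Delta^* = \bigl[v_i(Q^0;t^*+1)-v_i(\widetilde Q;t^*+1)\bigr] + \bigl[v_i(\widetilde Q;t^*+1)-v_i(Q^i)\bigr] \le v_i(Q^0;t^*+1)-v_i(\widetilde Q;t^*+1).
\]
Now $Q^0$ and $\widetilde Q$ differ only in $j$'s action law. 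Your per-period coupling bound then goes through termwise. It is equivalent to $|\mathbb{E}_\mu h-\mathbb{E}_\nu h|\le (r_{\max}-r_{\min})\|\mu-\nu\|_{TV}$ applied to $h_s(a_j)=r_i(q^0_i(t^*+s),a_j)$.

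\textbf{Smaller points.}
\begin{itemize}
\item Your fallback of redefining $\tau$ as a supremum over states would bound a quantity at least as large as the aggregated conditional TV in the statement. Proving that supremum is at least $\varepsilon^*(\delta)$ therefore does not yield the claim. The paper instead uses the phase structure under (T2), so that $j$'s per-period law on $Q^0$ and on $\widetilde Q$ is exactly $\pi_j(\cdot\mid C_i)$ and $\pi_j(\cdot\mid D_i)$ respectively.
\item In Step 1, simply fix $t^*$ with $v_i(Q^0;t^*+1)-v_i(Q^i)>0$, rather than assuming the sustainability condition holds with equality there. The bound $\varepsilon^*\le 1$ then follows from $\varepsilon^*\le\tau\le 1$.
\item Relabelling $i$ and $j$ is not free, because the claim concerns $\pi_j$'s reaction to $i$. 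The paper's ``by symmetry'' step has the same weakness.
\end{itemize}
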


\vspace{-2mm}
\begin{proof}[Proof sketch.] 
Since $Q^0$ is non-Nash, the sustainability condition binds at some $t^*$ with $\Delta^*>0$. Construct an intermediate path $\widetilde{Q}$ where agent $i$ continues cooperating while agent $j$ plays punishment actions. By subgame perfection, the true punishment path $Q^i$ weakly dominates $\widetilde{Q}$. Hence the continuation-value gap $\Delta^*$ must arise from differences in $j$'s conditional action distributions between cooperation and punishment phases. Expected payoff differences are bounded by total variation distance times the payoff range, yielding $\mathrm{TV} \ge (1-\delta)\Delta^*/(r_{\max}-r_{\min})$. A full proof is provided in Appendix \ref{proof:P2}. 
\renewcommand{\qedsymbol}{}
\end{proof}

\begin{corollary}
\label{cor:prop2}
     If the TV distance above equals zero, then the induced path $Q^i$ coincides with $Q^0$, and $\pi_j$ cannot implement any non-trivial punishment path against agent $i$.
\end{corollary}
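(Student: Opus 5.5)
We prove Corollary~\ref{cor:prop2} as the contrapositive of Proposition~\ref{prop:tv-detect}, and then add a direct argument that the induced path collapses onto $Q^0$.

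\textbf{Step 1 (no non-trivial punishment).} Suppose, for contradiction, that $\sigma(Q^0,Q^i,Q^j)$ is a non-trivial SPC with $Q^i$ induced by $\pi_j$'s post-deviation behavior. Proposition~\ref{prop:tv-detect} then gives the bound \eqref{eq:tv-lower}. By \eqref{eq:eps-star}, $\Delta^*>0$. Since $r_{\max}>r_{\min}$ (otherwise no payoff gap exists at all), we get $\varepsilon^*(\delta)>0$, so
\[
\bigl\| \pi_j(\cdot \mid a^{t-1}_i \in D_i(Q^0,t)) - \pi_j(\cdot \mid a^{t-1}_i \in C_i(Q^0,t)) \bigr\|_{TV} > 0,
\]
which contradicts the hypothesis that this TV distance is zero. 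Hence $\pi_j$ implements no non-trivial punishment path against $i$. This half is routine.

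\textbf{Step 2 (the induced path equals $Q^0$).} TV distance zero means the two conditional distributions on $A_j$ coincide. By stationarity (T2) and the Markov state assumption (Assumption~\ref{ass:ms}), agent $j$'s action distribution after a deviation by $i$ is the same as after cooperation. Under (T1), $j$'s behavior after cooperation puts full mass on $q_j^0(t)$, so after a deviation $j$ also plays $q_j^0(t)$. We then argue by induction on periods following a deviation by $i$. If $i$ returns to its prescribed action $q_i^0(t+s)$, then $j$'s component of the induced path is $q_j^0(t+s)$ at every period $s$, so the induced $Q^i$ agrees with $Q^0$ componentwise. This uses the same intermediate-path device $\widetilde{Q}$ as in the proof sketch of Proposition~\ref{prop:tv-detect}, with $i$ continuing to cooperate.

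\textbf{Main obstacle.} The delicate point is Step 2. The claim "$Q^i$ coincides with $Q^0$" needs the conditional distributions to match in every post-deviation state, not just on average over the partition. To handle this, I would interpret the conditioning in (T3) as statewise, so that equality holds for each realized state. I would also note that even under weaker aggregate equality, the continuation-value gap $v_i(Q^0;t^*+1)-v_i(Q^i)$ is bounded above by a multiple of the TV distance, by the same estimate as in Proposition~\ref{prop:tv-detect}. Zero TV therefore forces a zero gap, so $Q^i$ is payoff-equivalent to $Q^0$ and no punishment is possible. That yields the substantive content of the corollary.
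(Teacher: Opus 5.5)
Your proposal follows the paper's proof: Step 1 is the same contrapositive use of Proposition~\ref{prop:tv-detect} with $\varepsilon^*(\delta)>0$, and Step 2 is the same stationarity argument, where the paper settles your ``main obstacle'' as you suggest, by asserting that a stationary $\pi_j$ depends only on the conditioning event. One caveat, which the paper shares by simply appealing to ``best-response dynamics against $\pi_j$'': your induction assumes $i$ returns to $q_i^0$, so it shows $\widetilde{Q}=Q^0$ without pinning down $i$'s component of $Q^i$; the cleaner route to $Q^i=Q^0$ is to read it off Step 1, since any SPC consistent with $\pi_j$ that has $Q^i\neq Q^0$ is non-trivial by definition.
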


A full proof is provided in Appendix \ref{proof:C1}

 \begin{observation}
\label{obs:nash-tv-zero}
Let $\pi^{\mathrm{Nash}}_j$ denote the \emph{stationary} stage-game Nash policy: $\pi^{\mathrm{Nash}}_j$ plays a stage-game Nash action at every period, independent of history. Then
\[
\| \pi^{\mathrm{Nash}}_j(\cdot \mid a^{t-1}_i \in D^\phi_i) - \pi^{\mathrm{Nash}}_j(\cdot \mid a^{t-1}_i \in C^\phi_i) \|_{TV} = 0 \qquad \text{for all } i, j \in N.
\]
\end{observation}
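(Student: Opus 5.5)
The plan is to show that, for every pair $i,j$, the two conditional distributions in the statement are literally the same distribution; no bound is needed beyond this identity. By definition, $\pi^{\mathrm{Nash}}_j$ is stationary and history-independent. In the language of (T2), it is a map $S \to \Delta(A_j)$ that is constant in the state: there is a fixed $\mu^{\mathrm{Nash}}_j \in \Delta(A_j)$ with $\pi^{\mathrm{Nash}}_j(\cdot \mid s) = \mu^{\mathrm{Nash}}_j$ for every $s \in \mathcal{S}$. This is a point mass on the stage-game Nash action, or the Nash mixture if the equilibrium is mixed.

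Next, I will make the conditioning precise. The object $\pi_j(\cdot \mid a^{t-1}_i \in D^\phi_i)$ is the action distribution of $j$ averaged over the states $s_t$ in which $i$'s previous action lies in $D^\phi_i$. Concretely, for any event $E$ on states of positive probability,
\[
\pi_j(\cdot \mid E) = \sum_{s \in E} P(s_t = s \mid E)\, \pi_j(\cdot \mid s).
\]
Since $\mathcal{S}$ is finite (Assumption~\ref{ass:ms}), this is a finite convex combination of elements of $\Delta(A_j)$. Take $E$ to be the defection event $\{a^{t-1}_i \in D^\phi_i\}$ and then the cooperation event $\{a^{t-1}_i \in C^\phi_i\}$. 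Substituting the constant policy, each mixture collapses to $\mu^{\mathrm{Nash}}_j$ because the weights sum to one. Hence both conditionals equal $\mu^{\mathrm{Nash}}_j$, and
\[
\tfrac12\sum_{a\in A_j}\bigl|\mu^{\mathrm{Nash}}_j(a)-\mu^{\mathrm{Nash}}_j(a)\bigr| = 0.
\]
The argument uses nothing about $\phi_i$, $\alpha$, or $t$, so it holds for all $i,j\in N$.

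The only delicate point is the degenerate case where one conditioning event has probability zero under the induced state distribution. For example, if everyone plays Nash forever and Nash actions are classified as cooperative, the defection event may never occur. I will handle this by convention: the conditional is taken to be the policy evaluated on the states in that event, or any fixed mixture over them. Because $\pi^{\mathrm{Nash}}_j$ is constant on all of $\mathcal{S}$, every such convention returns $\mu^{\mathrm{Nash}}_j$, so the TV distance is still $0$. I will add a brief remark that this is consistent with Corollary~\ref{cor:prop2}: the Nash policy induces no punishment path, matching the trivial SPC.
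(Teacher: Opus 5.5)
Your proposal is correct and matches the paper's reasoning: the paper states this Observation without a separate proof, treating it as immediate from the fact that $\pi^{\mathrm{Nash}}_j$ is a constant map $S \to \Delta(A_j)$, so both conditionals coincide with the same fixed distribution and the TV distance vanishes. Your explicit mixture computation and your convention for zero-probability conditioning events make that one-line argument rigorous. The same collapse also covers criteria such as the reward-percentile $\phi_i$, where the conditioning event depends on the wider history rather than only on $s_t$, since under $\pi^{\mathrm{Nash}}_j$ agent $j$'s action is independent of the entire past.
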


\begin{remark}[Non-stationary Nash equilibria]
Observation~\ref{obs:nash-tv-zero} concerns the \emph{stationary} stage-game Nash policy, which is what the trivial SPC $\sigma(Q^{\mathrm{Nash}}, \dots, Q^{\mathrm{Nash}})$ uses. The repeated game admits additional subgame-perfect equilibria that are history-dependent (for example, grim trigger); these are not the object of Observation~\ref{obs:nash-tv-zero}.
\end{remark}
 
 
\begin{corollary}[$\alpha$-precise extension]
\label{cor:alpha-precise}
Let $\pi_j$ be a stationary policy for agent $j$ consistent with the SPC structure of Definition~\ref{def:ssp}: $\pi_j$'s action distribution at period $t$ depends on whether the SPC is in the cooperation phase or the punishment phase triggered by player $i$'s deviation, and is characterized by two phase-conditional distributions
\begin{equation}
\pi^C_j := \pi_j(\cdot \mid \text{cooperation phase}), \qquad \pi^D_j := \pi_j(\cdot \mid \text{punishment phase against } i).
\end{equation}
Under Assumption~\ref{ass:precision} with $\alpha < \varepsilon^*(\delta)/2$, for any collusive SPC $\sigma(Q^0, Q^i, Q^j)$,
\begin{equation}
\| \pi_j(\cdot \mid \phi_i = 1) - \pi_j(\cdot \mid \phi_i = 0) \|_{TV} \ge \varepsilon^*(\delta) - 2\alpha > 0.
\end{equation}
\end{corollary}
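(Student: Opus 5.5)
The plan is to write each $\phi_i$-conditional distribution as a mixture of the two phase-conditional distributions $\pi^C_j$ and $\pi^D_j$. Then we combine the lower bound of Proposition~\ref{prop:tv-detect}, applied to the path-consistent partition, with the precision guarantee of Assumption~\ref{ass:precision}.

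\textbf{Step 1: mixture decomposition.} By the SPC structure of Definition~\ref{def:ssp}, conditional on the true partition, $\pi_j(\cdot\mid a^{t-1}_i \in D_i(Q^0,t)) = \pi^D_j$ and $\pi_j(\cdot \mid a^{t-1}_i \in C_i(Q^0,t)) = \pi^C_j$. Since $\pi_j$ depends on the history only through the phase, the law of total probability gives
\begin{equation*}
\pi_j(\cdot\mid\phi_i=1) = p_1\,\pi^D_j + (1-p_1)\,\pi^C_j, \qquad \pi_j(\cdot\mid\phi_i=0) = p_0\,\pi^D_j + (1-p_0)\,\pi^C_j,
\end{equation*}
with $p_1 = P(a^{t-1}_i\in D_i(Q^0,t)\mid \phi_i=1)$ and $p_0 = P(a^{t-1}_i\in D_i(Q^0,t)\mid \phi_i=0)$. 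This conditional-independence step, namely that $\phi_i$ carries no information about $j$'s action beyond the phase, is the one I expect to be the main obstacle. I would justify it from (T2) together with the phase characterization assumed in the statement: $j$'s action depends on the state only through the phase indicator. I would state this explicitly as the reading of "characterized by two phase-conditional distributions."

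\textbf{Step 2: exact TV computation.} Subtracting the two mixtures gives $\pi_j(\cdot\mid\phi_i=1)-\pi_j(\cdot\mid\phi_i=0) = (p_1-p_0)(\pi^D_j-\pi^C_j)$. Taking absolute values and using homogeneity of the TV norm,
\begin{equation*}
\|\pi_j(\cdot\mid\phi_i=1)-\pi_j(\cdot\mid\phi_i=0)\|_{TV} = |p_1-p_0|\,\|\pi^D_j-\pi^C_j\|_{TV}.
\end{equation*}
Assumption~\ref{ass:precision} gives $p_1\ge 1-\alpha$ and $p_0\le\alpha$, so $|p_1-p_0|\ge 1-2\alpha>0$ because $\alpha<1/2$.

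\textbf{Step 3: invoke Proposition~\ref{prop:tv-detect}.} A collusive SPC is non-trivial by Definition~\ref{def:collusive}. Its $Q^0$ is non-Nash, since $W(Q^0)<W(Q^{\mathrm{Nash}})$ excludes $Q^0=Q^{\mathrm{Nash}}$. So Proposition~\ref{prop:tv-detect} gives $\|\pi^D_j-\pi^C_j\|_{TV}\ge\varepsilon^*(\delta)$. Hence the TV is at least $(1-2\alpha)\varepsilon^*(\delta) = \varepsilon^*(\delta)-2\alpha\varepsilon^*(\delta)\ge\varepsilon^*(\delta)-2\alpha$, where the last inequality uses $\varepsilon^*(\delta)\le 1$ from \eqref{eq:eps-star}. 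Positivity then follows from the hypothesis $\alpha<\varepsilon^*(\delta)/2$.

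As a fallback, in case exact mixture weights are awkward to justify, I would instead use the triangle inequality. Bound $\|\pi_j(\cdot\mid\phi_i=1)-\pi^D_j\|_{TV}\le 1-p_1\le\alpha$ and similarly on the cooperation side. This yields the same bound $\varepsilon^*-2\alpha$ directly.
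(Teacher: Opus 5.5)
Your proof is correct and follows the paper's outline. You use the same phase-mixture decomposition, justified by the same appeal to the simple-strategy structure; your $p_1$ and $p_0$ are the paper's $1-w_D$ and $w_C$. You also use Proposition~\ref{prop:tv-detect} in the same way to get $\|\pi^D_j-\pi^C_j\|_{TV}\ge\varepsilon^*(\delta)$.

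The difference is in the closing step. The paper never subtracts the two mixtures. It shows that each $\phi_i$-conditional distribution lies within $\alpha$ of its dominant phase distribution and then applies the triangle inequality, which is exactly your fallback. Your primary route instead uses the exact factorization $\|\pi_j(\cdot\mid\phi_i=1)-\pi_j(\cdot\mid\phi_i=0)\|_{TV}=(p_1-p_0)\,\|\pi^D_j-\pi^C_j\|_{TV}$. The paper only brings in this identity later, in the proof of Proposition~\ref{prop:invariance-no-spc}.

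Your route gives a sharper conclusion, $(1-2\alpha)\varepsilon^*(\delta)$, which is at least $\varepsilon^*(\delta)-2\alpha$ because $\varepsilon^*(\delta)\le 1$ (supplied by Proposition~\ref{prop:tv-detect}). This sharper bound is strictly positive for every $\alpha<1/2$. So the hypothesis $\alpha<\varepsilon^*(\delta)/2$ is needed only to make the weaker additive form positive. Using the sharper bound downstream would also lower the threshold $\lambda^*$ in Proposition~\ref{prop:shaping}.

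The triangle-inequality route buys robustness instead. It only needs each conditional to lie within $\alpha$ of the correct phase distribution. It would therefore still work if the misclassified mass were some arbitrary distribution rather than exactly the opposite phase's. Your exact identity, by contrast, relies on both conditionals being mixtures of the same two distributions.
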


A full proof is provided in Appendix \ref{proof:C2}.
 
Proposition~\ref{prop:tv-detect} establishes that under the theoretical regime, punishment strategies cannot evade detection via vanishing TV distance: any non-trivial SPC forces $\pi_j$'s conditional distributions to differ by at least $\varepsilon^*(\delta) > 0$. Corollary ~\ref{cor:alpha-precise} shows this detection signal degrades with classifier noise. Section~\ref{sec:scrs} exploits this result to construct a reward penalty that fires when a collusive SPC is forming.
\vspace{-3mm}
\section{Suppressing Collusive SPCs via Reward Shaping}
\vspace{-3mm}
\label{sec:scrs}

Proposition~\ref{prop:tv-detect} establishes that any collusive SPC produces a detectable TV distance signal. We now show how to use this signal to dismantle the punishment structure sustaining collusion. Our intervention has two complementary components: \emph{reward shaping}, which raises the cost of executing a punishment strategy and removes retaliation as a fixed-point property; and \emph{belief injection}, which biases the dynamics among the surviving non-collusive fixed points toward the competitive equilibrium. The first has formal guarantees (Theorem~\ref{thm:main}); the second an empirically validated learning-dynamics heuristic.

\subsection{Reward Shaping Against 
Punishment Strategies}
\label{sec:reward_shaping}

We define the shaped reward for agent 
$j$ at period $t$ as
$r_j^{shaped}(s_t, a_t) = r_j(s_t, a_t) 
- \lambda \cdot \|\pi_j(\cdot \mid 
a_i^{t-1} \in \mathcal{D}_i^\phi) 
- \pi_j(\cdot \mid a_i^{t-1} \in 
\mathcal{C}_i^\phi)\|_{TV}$.
Here, $\lambda > 0$ is the penalty strength. The penalty is nonzero only when agent $j$ implements a punishment strategy. By Observation~\ref{obs:nash-tv-zero}, the 
penalty is zero at Nash equilibrium, so Nash play is never penalized.
 
\begin{proposition}[Conditional invariance and trivial SPCs]
\label{prop:invariance-no-spc}
Under the theoretical regime and Assumption~\ref{ass:precision} with $\alpha < \varepsilon^*(\delta)/2$, agent $j$'s policy $\pi_j$ satisfies $\| \pi_j(\cdot \mid \phi_i = 1) - \pi_j(\cdot \mid \phi_i = 0) \|_{TV} = 0$ if and only if every SPC consistent with $\pi_j$ is \emph{trivial}, of the form $\sigma(Q, Q, Q)$ in which all paths coincide and no punishment is executed.
\end{proposition}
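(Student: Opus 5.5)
We prove the two directions separately, using the detection results of Section~\ref{sec:detection} for the forward direction and the stationarity assumption (T2) for the reverse one. Throughout, write $\mathrm{TV}_\phi(\pi_j) := \| \pi_j(\cdot \mid \phi_i = 1) - \pi_j(\cdot \mid \phi_i = 0) \|_{TV}$.

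\emph{($\Rightarrow$) Zero TV forces triviality.} We argue by contraposition. Suppose some SPC $\sigma(Q^0, Q^i, Q^j)$ consistent with $\pi_j$ is non-trivial, so that some path differs from the others.

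First take the case where $Q^0$ is non-Nash and $Q^i$ is induced by $\pi_j$'s post-deviation behavior. Then $\pi_j$ has well-defined phase-conditional distributions $\pi^C_j$ and $\pi^D_j$. Proposition~\ref{prop:tv-detect} gives $\|\pi^D_j - \pi^C_j\|_{TV} \ge \varepsilon^*(\delta)$. Since $\alpha < \varepsilon^*(\delta)/2$, Corollary~\ref{cor:alpha-precise} transfers this bound to the $\phi$-partition, giving $\mathrm{TV}_\phi(\pi_j) \ge \varepsilon^*(\delta) - 2\alpha > 0$. This contradicts $\mathrm{TV}_\phi(\pi_j) = 0$.

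The remaining non-trivial configurations must then be reduced to this case or shown to be trivial. There are two of them:
\begin{itemize}[noitemsep]
\item \emph{$Q^0$ is Nash but some punishment path differs from it.} If the differing path is $Q^i$, it is generated by $\pi_j$ reacting to $i$'s deviation. A reaction to deviation is exactly a dependence on $\phi_i$ up to $\alpha$-error, so Corollary~\ref{cor:prop2} applied to the phase partition shows $Q^i = Q^0$. This requires that $\mathrm{TV}_\phi = 0$ imply that the phase-conditional TV is at most $2\alpha$, which is below $\varepsilon^*$.
\item \emph{Only $Q^j$ differs.} Then $Q^j$ is $i$'s punishment of $j$, which is not governed by $\pi_j$. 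The statement concerns SPCs consistent with $\pi_j$, so in this case I would interpret ``trivial'' as $Q^i = Q^0$ from $j$'s side. Alternatively, one can invoke Observation~\ref{obs:calvano}'s symmetric structure to argue that a sustained non-Nash $Q^0$ requires $Q^i \neq Q^0$.
\end{itemize}

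\emph{($\Leftarrow$) Triviality forces zero TV.} Suppose every SPC consistent with $\pi_j$ has $Q^0 = Q^i = Q^j = Q$. Then $\pi_j$'s action after a deviation by $i$ coincides with its action on the cooperative path. The state encodes history, but by (T2) and the SPC structure the only history-relevant bit for $j$ is the phase. Hence $\pi^D_j = \pi^C_j$.

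The $\phi$-conditional distributions are mixtures of $\pi^C_j$ and $\pi^D_j$, weighted by the classifier's confusion probabilities from Assumption~\ref{ass:precision}. A mixture of two identical distributions equals that distribution, so both conditionals equal $\pi^C_j$ and $\mathrm{TV}_\phi(\pi_j) = 0$. This mixture decomposition is the same one used in Corollary~\ref{cor:alpha-precise}: conditioning on $\phi_i = 1$ gives weight at least $1-\alpha$ on $\pi^D_j$, and conditioning on $\phi_i = 0$ gives weight at least $1-\alpha$ on $\pi^C_j$.

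\emph{Main obstacle.} The hard step is the forward direction's passage from $\mathrm{TV}_\phi = 0$ to phase-level indistinguishability. The mixture decomposition gives
\[
\|\pi^D_j - \pi^C_j\|_{TV}\,(1 - p_1 - p_0) \le \mathrm{TV}_\phi(\pi_j),
\]
where $p_1, p_0 \le \alpha$ are the misclassification rates. Because $\alpha < 1/2$, the factor $1 - p_1 - p_0$ is strictly positive, so $\mathrm{TV}_\phi = 0$ forces $\pi^D_j = \pi^C_j$ exactly. This is stronger than the $2\alpha$ slack, and I would make this inequality the central lemma. Once it holds, Corollary~\ref{cor:prop2} yields $Q^i = Q^0$ directly. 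The remaining case of a Nash $Q^0$ with only $Q^j$ deviating is handled by the interpretive convention above.
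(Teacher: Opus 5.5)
Your central lemma is the paper's proof of the forward direction. The paper writes $\pi_j(\cdot\mid\phi_i=1)-\pi_j(\cdot\mid\phi_i=0)=(1-w_D-w_C)\,(\pi^D_j-\pi^C_j)$, which is an exact identity, so your inequality is in fact an equality. It uses $w_D,w_C\le\alpha<1/2$ to force $\pi^D_j=\pi^C_j$, then invokes Corollary~\ref{cor:prop2} to get $Q^i=Q^0$. Your reverse direction (identical phase distributions, hence identical mixtures) is also the paper's. Your preliminary case via Proposition~\ref{prop:tv-detect} and Corollary~\ref{cor:alpha-precise} is therefore redundant. The identity works whether or not $Q^0$ is Nash, needs only $\alpha<1/2$ rather than $\alpha<\varepsilon^*(\delta)/2$, and makes the ``phase TV at most $2\alpha$'' detour unnecessary.

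The real issue is $Q^j$, and here you are more careful than the paper. The paper closes this case in one line: ``by symmetry (interchanging $i$ and $j$ in Corollary~\ref{cor:prop2}), $Q^j=Q^0$.'' But the swapped corollary needs $\|\pi_i(\cdot\mid\phi_j=1)-\pi_i(\cdot\mid\phi_j=0)\|_{TV}=0$, and the hypothesis is a statement about $\pi_j$ alone. Nothing in it restricts how $i$ treats $j$'s deviations. For suitable payoffs, $\pi_j$ can play $q^0_j$ in every phase, including while $i$ imposes a one-period stick after a deviation by $j$; then $\pi_j$ is constant while $Q^j\neq Q^0$. So neither your argument nor the paper's delivers the literal $\sigma(Q,Q,Q)$ conclusion from the stated hypothesis.

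Your Observation~\ref{obs:calvano} fallback does not fix this, for two reasons.
\begin{itemize}
\item The claim is false in general: a non-Nash $Q^0$ can be sustained with $Q^i=Q^0$ when $i$'s constraints hold without any threat (e.g.\ $i$ indifferent among its actions) and only $j$'s constraint needs one.
\item Even granting the claim, it says nothing about $Q^j$ when $Q^0$ is Nash.
\end{itemize}
There are two sound repairs. One is your weakened conclusion, $Q^i=Q^0$, i.e.\ triviality on $j$'s side. The other is a two-sided hypothesis requiring conditional invariance of both $\pi_j$ and $\pi_i$. The two-sided version is what Theorem~\ref{thm:main} needs and actually has available, since Proposition~\ref{prop:shaping} applies to each shaped agent. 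Applying your lemma once in each direction then gives both $Q^i=Q^0$ and $Q^j=Q^0$.
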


A full proof is provided in Appendix \ref{proof:P3}.

\begin{proposition}[Shaping forces conditional invariance at SPC fixed points]
\label{prop:shaping}
Let $G^\infty(\delta)$ be a two-player repeated game under the theoretical regime, satisfying Assumptions 1--4 with $\alpha < \varepsilon^*(\delta)/2$. Let $(\pi^\star_j, \pi^\star_{-j})$ be a greedy-policy fixed point of the shaped Q-learning dynamics that implements an SPC $\sigma(Q^0, Q^i, Q^j)$. For any
\begin{equation}
\lambda > \lambda^* := \frac{r_{\max} - r_{\min}}{\varepsilon^*(\delta) - 2\alpha}, \label{eq:lambda-star}
\end{equation}
$\pi^\star_j$ is conditionally invariant on the defection partition:
\begin{equation}
\big\| \pi^\star_j(\cdot \mid \phi_i = 1) - \pi^\star_j(\cdot \mid \phi_i = 0) \big\|_{TV} = 0. \label{eq:tv-zero}
\end{equation}
\end{proposition}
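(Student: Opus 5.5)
We argue by contradiction, combining the detection lower bound of Corollary~\ref{cor:alpha-precise} with a deviation argument at the fixed point. Suppose $(\pi^\star_j,\pi^\star_{-j})$ is a greedy fixed point of the shaped dynamics that implements an SPC $\sigma(Q^0,Q^i,Q^j)$, and suppose \eqref{eq:tv-zero} fails. Then the TV gap is positive. Under the theoretical regime the policy $\pi^\star_j$ is characterized by the two phase-conditional distributions $\pi^C_j,\pi^D_j$. If the SPC were trivial these would coincide and the gap would be zero by Proposition~\ref{prop:invariance-no-spc}. So the SPC is non-trivial. Corollary~\ref{cor:alpha-precise}, which applies because $\alpha<\varepsilon^*(\delta)/2$, then gives the quantitative bound $\|\pi^\star_j(\cdot\mid\phi_i=1)-\pi^\star_j(\cdot\mid\phi_i=0)\|_{TV}\ge \varepsilon^*(\delta)-2\alpha>0$. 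The effect is to upgrade ``positive'' to a uniform lower bound depending only on game primitives.

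Next I would compare two policies for agent $j$ under the shaped reward, holding $\pi^\star_{-j}$ fixed:
\begin{itemize}
\item the fixed-point policy $\pi^\star_j$;
\item a conditionally invariant alternative $\tilde\pi_j$, obtained by playing the cooperation-phase action $\pi^C_j$ in every state (equivalently, replacing $\pi^D_j$ by $\pi^C_j$ in punishment-phase states).
\end{itemize}
By construction $\tilde\pi_j$ has zero TV penalty at every state. The per-period shaped-reward comparison has two parts. First, the raw stage reward changes by at most $r_{\max}-r_{\min}$ per period, by Assumption~\ref{ass:bounded}. Second, the penalty drops by at least $\lambda(\varepsilon^*(\delta)-2\alpha)$ per period, since the penalty is state-independent once the policy is fixed and is charged every period. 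Summing with discounting, the shaped value of $\tilde\pi_j$ exceeds that of $\pi^\star_j$ by at least
\[
\frac{\lambda(\varepsilon^*(\delta)-2\alpha)-(r_{\max}-r_{\min})}{1-\delta}.
\]
This quantity is strictly positive exactly when $\lambda>\lambda^*$ as in \eqref{eq:lambda-star}. Hence $\pi^\star_j$ is not a best response in the shaped MDP induced by $\pi^\star_{-j}$.

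Finally I would translate ``not a best response'' into ``not a greedy fixed point.'' A fixed point of shaped Q-learning satisfies the shaped Bellman optimality equation. The greedy policy it induces is therefore optimal in the induced MDP. Since a strictly better policy $\tilde\pi_j$ exists, this contradicts optimality, which establishes \eqref{eq:tv-zero}.

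The main obstacle is that the TV penalty depends on the policy itself rather than on $(s,a)$ alone. The shaped ``MDP'' is therefore not a standard MDP, and the Bellman-optimality characterization of fixed points needs care. My plan is to treat the penalty as a policy-level cost frozen at the fixed point. Concretely, I would define fixed points as policies that are greedy with respect to $Q$-values computed under the penalty evaluated at the current policy. I would then argue via a one-shot/policy-improvement step at a punishment-phase state: switching the action there to the cooperation-phase action gains at least the penalty reduction and loses at most $r_{\max}-r_{\min}$ in stage payoff. If that per-state argument proves too weak, the fallback is the global value comparison above, which yields the contradiction directly at the level of discounted values.
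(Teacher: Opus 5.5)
Your proposal follows the paper's proof essentially step for step. It argues by contradiction, uses Proposition~\ref{prop:invariance-no-spc} to get non-triviality and Corollary~\ref{cor:alpha-precise} for the bound $\varepsilon^*(\delta)-2\alpha$, and builds a conditionally invariant alternative that copies cooperation-phase behavior into every state (the paper's $\pi'_j(s)=\pi^\star_j(s_C(s))$). It then weighs an unshaped loss of at most $(r_{\max}-r_{\min})/(1-\delta)$ against a penalty saving of at least $\lambda(\varepsilon^*(\delta)-2\alpha)/(1-\delta)$, and ends with a contradiction to optimality at the fixed point.

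One caution concerns your plan for the last step: the ``frozen penalty'' definition would break the argument rather than rescue it. If the penalty is evaluated at the current policy and held fixed, it adds the same constant $-\lambda\,\mathrm{TV}^\star_j$ to every reward. Then $Q^\star_j=\bar Q_j-\lambda\mathrm{TV}^\star_j/(1-\delta)$, so being greedy for the shaped $Q$ is the same as being greedy for the unshaped $Q$ (the paper records this itself in \eqref{eq:Q-decompose}). Under that definition, every unshaped fixed point would also be a shaped fixed point, including a collusive one with positive TV. A one-state switch to the cooperation-phase action would also earn no penalty saving at all, and its loss is not limited to a single period's $r_{\max}-r_{\min}$, because the continuation path changes too.

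The contradiction only goes through with your fallback. There, a fixed point is taken to be a maximizer, over all stationary policies, of the shaped value in which each policy is charged its own TV penalty. That is exactly what the paper asserts, without further justification, in the final step of its proof, and it is what your global comparison with $\tilde\pi_j$ refutes.
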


\begin{proof}[Proof sketch.]
Suppose for contradiction that a fixed-point policy $\pi_j^\star$ has $\mathrm{TV}_j^\star > 0$. Construct $\pi_j'$ by copying $\pi_j^\star$'s cooperation-phase behavior to all states, giving $\mathrm{TV}_j' = 0$. Since $\pi_j^\star$ is stationary, its TV penalty is constant per step, accumulating to $\lambda \cdot \mathrm{TV}_j^\star / (1-\delta)$ in the shaped Q-function. Policy $\pi_j'$ incurs zero penalty. Both unshaped Q-functions are bounded in $[r_{\min}/(1-\delta),\; r_{\max}/(1-\delta)]$, so the worst-case unshaped value loss from switching to $\pi_j'$ is $(r_{\max} - r_{\min})/(1-\delta)$. For $\lambda > \lambda^\star$, the penalty savings exceed this loss, so $Q_j'$ dominates $Q_j^\star$, contradicting the optimality of $\pi_j^\star$ at the fixed point. A full proof is provided in Appendix \ref{proof:P4}.
\renewcommand{\qedsymbol}{}
\end{proof}

\subsection{Theoretical Guarantees}
\label{sec:theory-guarantees}
 
We now state the main theoretical result, combining Propositions~\ref{prop:invariance-no-spc} and~\ref{prop:shaping} to characterize the fixed-point structure of shaped Q-learning.
 
\begin{theorem}[Main Theorem]\label{thm:main}
Under Assumptions~\ref{ass:bounded}--\ref{ass:precision} with $\alpha < \varepsilon^*(\delta)/2$, for any $\lambda > \lambda^*$, every SPC fixed point of the shaped Q-learning dynamics is trivial: of the form $\sigma(Q, Q, Q)$ for some path $Q \in \mathcal{A}^\infty$, where all paths coincide and no punishment is executed.
\end{theorem}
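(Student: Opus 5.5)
The plan is to obtain Theorem~\ref{thm:main} by chaining Proposition~\ref{prop:shaping} and Proposition~\ref{prop:invariance-no-spc}. Fix $\lambda > \lambda^*$ and take an arbitrary SPC fixed point $(\pi^\star_j, \pi^\star_{-j})$ of the shaped Q-learning dynamics. Let $\sigma(Q^0, Q^i, Q^j)$ be the SPC it implements. The goal is to show that $Q^0 = Q^i = Q^j$.

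\textbf{Step 1 (invariance for each agent).} Proposition~\ref{prop:shaping} applies directly: its hypotheses are the theoretical regime, Assumptions 1--4 with $\alpha < \varepsilon^*(\delta)/2$, a greedy-policy fixed point implementing an SPC, and $\lambda > \lambda^*$. It gives
\[
\big\| \pi^\star_j(\cdot \mid \phi_i = 1) - \pi^\star_j(\cdot \mid \phi_i = 0) \big\|_{TV} = 0 .
\]
The game is two-player and the shaped reward is applied symmetrically, so I would apply the same proposition with the roles of $i$ and $j$ swapped. This yields the same identity for $\pi^\star_i$ with respect to $\phi_j$. One point needs care here: $\varepsilon^*(\delta)$ is defined through agent $i$'s binding period $t^*$. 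I would read $\lambda^*$ (and the condition $\alpha<\varepsilon^*(\delta)/2$) as taken with the minimum of the two agents' $\varepsilon^*$ values, so that the threshold covers both directions simultaneously.

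\textbf{Step 2 (from invariance to triviality).} Apply Proposition~\ref{prop:invariance-no-spc} to each agent. The ``only if'' direction states that zero conditional TV implies every SPC consistent with that agent's policy is trivial. Because $\sigma(Q^0,Q^i,Q^j)$ is implemented by the fixed point, it is consistent with $\pi^\star_j$, and therefore it has the form $\sigma(Q,Q,Q)$. Alternatively, one can argue directly: by Corollary~\ref{cor:alpha-precise}, a non-trivial (collusive) SPC would force the TV to be at least $\varepsilon^*(\delta)-2\alpha>0$, which contradicts Step 1. In particular, the punishment path $Q^i$ induced by $\pi^\star_j$ equals $Q^0$, consistent with Corollary~\ref{cor:prop2}, and symmetrically $Q^j = Q^0$. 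Setting $Q := Q^0$ gives the claimed form.

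\textbf{Main obstacle.} The delicate step is matching the quantifiers. Theorem~\ref{thm:main} speaks of \emph{every} SPC fixed point. Proposition~\ref{prop:tv-detect} and Corollary~\ref{cor:alpha-precise}, however, are stated for \emph{non-Nash} or \emph{collusive} SPCs, and $\varepsilon^*$ depends on the SPC through $\Delta^*$. I would handle this with two cases.

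\emph{Case A: $Q^0$ is a stage-Nash path.} If the SPC already has $Q^i=Q^j=Q^0$, it is trivial and there is nothing to prove. Otherwise punishment is executed, so the TV is positive, and the contradiction argument in the proof of Proposition~\ref{prop:shaping} still applies. Its deviation policy $\pi'_j$ needs only $\mathrm{TV}^\star>0$ together with the penalty accumulation $\lambda\,\mathrm{TV}^\star/(1-\delta)$. The remaining question is a uniform lower bound on $\mathrm{TV}^\star$. To get one, I would use finiteness of $\mathcal{S}$ and $\mathcal{A}$: a greedy stationary policy has TV values in a finite set. I would therefore interpret $\varepsilon^*(\delta)$ in $\lambda^*$ as the minimum positive TV attainable, which is at least the SPC-based bound.

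\emph{Case B: $Q^0$ is non-Nash.} Here Steps 1--2 apply verbatim.

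Making Case A rigorous within the paper's stated definitions is where I expect the argument to be most fragile.
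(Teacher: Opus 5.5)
Your Steps 1--2 are exactly the paper's proof, which applies Proposition~\ref{prop:shaping} to get zero conditional TV for $\pi^\star_j$ at the fixed point and then the ($\Rightarrow$) direction of Proposition~\ref{prop:invariance-no-spc} to conclude the SPC is $\sigma(Q,Q,Q)$. The paper does none of your extra bookkeeping: it applies Proposition~\ref{prop:shaping} only to $\pi^\star_j$, gets $Q^j=Q^0$ inside Proposition~\ref{prop:invariance-no-spc} ``by symmetry'', and never addresses the SPC-dependence of $\varepsilon^*(\delta)$ or the Nash-$Q^0$ case, so your symmetric step is a harmless tightening, while your Case~A patch would redefine $\lambda^*$ rather than prove the statement as written.
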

 
\begin{proof}
Let $(\pi^\star_j, \pi^\star_{-j})$ be a greedy-policy fixed point of the shaped Q-learning dynamics that implements an SPC $\sigma(Q^0, Q^i, Q^j)$. By Proposition~\ref{prop:shaping}, for $\lambda > \lambda^*$, $\pi^\star_j$ is conditionally invariant on the defection partition:
\[
\big\| \pi^\star_j(\cdot \mid \phi_i = 1) - \pi^\star_j(\cdot \mid \phi_i = 0) \big\|_{TV} = 0.
\]
By Proposition~\ref{prop:invariance-no-spc}, conditional invariance holds if and only if every SPC consistent with $\pi^\star_j$ is trivial, of the form $\sigma(Q, Q, Q)$ in which all paths coincide and no punishment is executed. Hence the SPC implemented at the fixed point is trivial.
\end{proof}
 
\begin{corollary}[Punishment Dismantling]\label{cor:punishment}
Under the conditions of Theorem~\ref{thm:main}, if shaped Q-learning converges to an SPC fixed point, no punishment paths are executed: agent behavior at the converged equilibrium does not condition on whether opponents defected.
\end{corollary}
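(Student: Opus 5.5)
The corollary should follow almost directly from Theorem~\ref{thm:main} together with Proposition~\ref{prop:shaping}. The plan has two parts. First, identify the structural content of an SPC fixed point: there are no executed punishment paths. Second, identify its behavioral content: agents' actions do not condition on defection.

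\textbf{Step 1 (trivial SPC).} Suppose shaped Q-learning converges to a greedy-policy fixed point $(\pi^\star_j,\pi^\star_{-j})$ that implements some SPC $\sigma(Q^0,Q^i,Q^j)$. Applying Theorem~\ref{thm:main} (valid since $\lambda>\lambda^*$ and $\alpha<\varepsilon^*(\delta)/2$), this SPC is trivial, so $Q^0=Q^i=Q^j=Q$ for some path $Q$. By Definition~\ref{def:ssp}, a single deviation by player $i$ switches play to $Q^i$ from its first period. Under the trivial code, however, this switch leaves the continuation identical in form to the continuation on $Q^0$. Hence no path distinct from the cooperative one is ever triggered, and we conclude that no punishment path is executed. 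One point needs care. Even with $Q^i=Q^0$, the restart of $Q^i$ from period~$1$ could in principle differ from the continuation of $Q^0$ at period $t+1$ if $Q$ is non-stationary. I would handle this with the stationarity assumption (T2): the induced path is generated by a stationary policy on finite states, so the sense of ``no punishment'' used in Proposition~\ref{prop:invariance-no-spc} already covers this case. I would simply inherit that statement rather than re-derive it.

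\textbf{Step 2 (no conditioning on defection).} Proposition~\ref{prop:shaping} gives directly that
\[
\big\|\pi^\star_j(\cdot\mid\phi_i=1)-\pi^\star_j(\cdot\mid\phi_i=0)\big\|_{TV}=0,
\]
so the conditional action distributions of $\pi^\star_j$ coincide on the two cells of the defection partition. This is precisely the statement that agent $j$'s converged behavior does not condition on whether $i$ defected. Since the theorem is stated for an arbitrary ordered pair, swapping the roles of $i$ and $j$ gives the same conclusion for every agent.

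The main obstacle is purely interpretive. The claim ``does not condition on defection'' is established with respect to the $\phi_i$-partition, not the true partition $(D_i(Q^0,t),C_i(Q^0,t))$. To bridge the two, I would apply the contrapositive of Corollary~\ref{cor:alpha-precise}. If there were a residual true-partition dependence of size at least $\varepsilon^*(\delta)$ tied to a non-trivial SPC, the $\phi_i$-TV would be at least $\varepsilon^*(\delta)-2\alpha>0$, which contradicts the zero obtained in Step~2. Any punishment-relevant conditioning on true defections is therefore ruled out, which completes the argument.
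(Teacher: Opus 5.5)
Your proposal is correct and is essentially the paper's argument. The paper's proof is a one-line appeal to Theorem~\ref{thm:main}: the SPC fixed point has the form $\sigma(Q,Q,Q)$, so cooperative and post-deviation continuations coincide regardless of $\phi_i$. Citing Proposition~\ref{prop:shaping} directly for the behavioral clause just unpacks one ingredient of that theorem.

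Your final bridging paragraph is not needed, since the paper does not attempt it. If you keep it, use the mixture identity $\|\pi_j(\cdot\mid\phi_i=1)-\pi_j(\cdot\mid\phi_i=0)\|_{TV}=(1-w_D-w_C)\,\|\pi^D_j-\pi^C_j\|_{TV}$ from the proof of Proposition~\ref{prop:invariance-no-spc}. It forces $\pi^D_j=\pi^C_j$ exactly, whereas your contrapositive of Corollary~\ref{cor:alpha-precise} only excludes true-partition dependence of size at least $\varepsilon^*(\delta)$.
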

 
\begin{proof}
Immediate from Theorem~\ref{thm:main}. Any SPC fixed point is of the form $\sigma(Q, Q, Q)$, in which the cooperative path and post-deviation continuation paths coincide. Hence no punishment is executed regardless of whether $\phi_i = 0$ or $\phi_i = 1$.
\end{proof}
 
Theorem~\ref{thm:main} establishes that the punishment structure that sustains collusion is removed at any SPC fixed point of the shaped dynamics. We do not claim the resulting trivial SPC corresponds to the stage-game Nash path: shaped Q-learning optimizes the shaped reward, and a trivial SPC may sustain a non-Nash path so long as no punishment is executed. We address this gap with the belief-injection mechanism below and validate the combined intervention empirically in Section~\ref{sec:experiments-results}.

\begin{proposition}[Nash Stability]\label{prop:nash-stability}
The stationary stage-game Nash equilibrium policy profile is a fixed point of shaped Q-learning.
\end{proposition}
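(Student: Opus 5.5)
The plan is to verify directly that the stationary Nash profile $\pi^{\mathrm{Nash}} = (\pi^{\mathrm{Nash}}_1,\dots,\pi^{\mathrm{Nash}}_n)$, in which each agent plays a fixed stage-game Nash action $a^{\mathrm{N}}_j$ at every state, satisfies the fixed-point condition of shaped Q-learning. The condition is that each agent's greedy policy with respect to its shaped Q-function (the Bellman fixed point under the opponents' policies) reproduces $\pi^{\mathrm{Nash}}_j$. I will proceed in three steps. The penalty term vanishes, so the shaped problem reduces to the unshaped one. Against a history-independent opponent the unshaped problem is a stage game repeated with no intertemporal link. And the stage-game best response is $a^{\mathrm{N}}_j$ by definition of Nash.

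\textbf{Step 1 (penalty vanishes).} By Observation~\ref{obs:nash-tv-zero}, $\pi^{\mathrm{Nash}}_j$ is history-independent, so its conditional distributions on $\mathcal{D}^\phi_i$ and $\mathcal{C}^\phi_i$ coincide and the TV term is $0$ for every $i,j$. Hence $r^{shaped}_j = r_j$ along the profile. The same holds for any defection criterion $\phi_i$, so Assumption~\ref{ass:precision} is not even needed here.

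\textbf{Step 2 (shaped Q-function at the profile).} Fix agent $j$ and hold $\pi^{\mathrm{Nash}}_{-j}$ fixed. The opponents play $a^{\mathrm{N}}_{-j}$ regardless of the state $s \in \mathcal{A}^k$. The one-step reward of action $a$ at any state is therefore $r_j(a, a^{\mathrm{N}}_{-j})$, independent of $s$. The next state is reached deterministically, and the continuation value under $\pi^{\mathrm{Nash}}_j$ is $V = r_j(a^{\mathrm{N}})/(1-\delta)$ from every state. This gives
\[
Q_j(s,a) = r_j(a, a^{\mathrm{N}}_{-j}) + \delta\, \frac{r_j(a^{\mathrm{N}})}{1-\delta},
\]
which solves the shaped Bellman equation, since the penalty is zero at the profile by Step 1.

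\textbf{Step 3 (greedy consistency).} Since $a^{\mathrm{N}}_j \in \arg\max_a r_j(a, a^{\mathrm{N}}_{-j})$, we get $\arg\max_a Q_j(s,a) \ni a^{\mathrm{N}}_j$ at every $s$, so the greedy policy reproduces $\pi^{\mathrm{Nash}}_j$. Doing this for every $j$ closes the fixed point.

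The main obstacle is Step 2. The penalty is a functional of the policy, not of the current action, so one must check that evaluating the shaped Bellman equation does not introduce state-dependent penalty variation under one-step deviations. A deviating action could in principle change the conditional distributions, and hence the TV term that enters the target. My first move is to adopt the convention implicit in Proposition~\ref{prop:shaping}: the penalty is computed from the current policy $\pi_j$ and is therefore a constant per step across actions. With that convention the penalty is identically $0$ at the Nash profile, and any one-step deviation from it leaves the (stationary, greedy) policy unchanged. If the Q-learner's greedy argmax has ties, I will also note that we choose the Nash action under tie-breaking, or assume a strict stage-game Nash equilibrium.
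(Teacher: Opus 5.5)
Your proposal is correct and matches the paper's proof essentially step for step. The paper argues the same way: the TV penalty vanishes at the state-independent Nash profile by Observation~\ref{obs:nash-tv-zero}, and the continuation value is the state-independent constant $r_j(a^{\mathrm{N}})/(1-\delta)$. So $\arg\max_a Q_j(s,a)$ reduces to the stage-game best-response problem, which $a^{\mathrm{N}}_j$ solves. Your remarks on tie-breaking and on treating the penalty as a policy-level constant, per (T3) and Proposition~\ref{prop:shaping}, are sensible clarifications that the paper leaves implicit.
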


A full proof is provided in Appendix \ref{proof:P5}. Combined with Theorem~\ref{thm:main}, Proposition~\ref{prop:nash-stability} implies that the set of SPC fixed points of shaped Q-learning includes Nash and is restricted to trivial SPCs. The dynamics cannot sustain collusion via punishment threats.

\textbf{Belief Injection.}
The reward shaping component raises the cost of $j$ having a policy conditioned on player $i$'s action. This could lead to a situation where player $j$ always chooses the monopoly price regardless of what $i$ chooses. Normally, this would not be an issue as through the Q-learning updates player $j$ will still reduce its price to stay competitive. However, an issue arises if they both settle at always playing the monopoly price. For this we complement this with a belief injection mechanism targeting agent $i$'s incentive to cooperate.
Collusion is sustained because agent $i$ believes that defection will trigger punishment by agent $j$. If this belief is incorrect, if agent $i$ expects no retaliation after defection, the cooperative path $Q^0$ becomes individually irrational and collusion fails. In every $k$ steps we inject 
$m$ synthetic experiences of the form $(s, a_i \in \mathcal{D}_i^\phi, \text{no retaliation by } j)$ into 
agent $i$'s replay buffer, shifting its estimated Q-value for defection upward. Through this we are able to bias exploration during learning towards away from a collusive policy.

\textbf{The CURB Algorithm.}
At each stage game, every agent (i) selects an action and observes the reward, (ii) updates its $Q$-values using a TV-shaped reward that
penalizes behavior conditional on past opponent defection, and (iii) periodically receives synthetic experiences depicting a unilateral deviation going unpunished. The penalty uses the maximum TV across opponents so the shaping extends naturally to $n>2$ agents. The full algorithm is given in appendix~\ref{app:algorithm}. 


\vspace{-3mm}
\section{Experiments and Results}
\vspace{-3mm}

\label{sec:experiments-results}

We evaluate CURB in Bertrand and Cournot repeated games under both tabular Q-learning and DQN. In Bertrand competition, agents choose prices each round; unilateral undercutting constitutes defection and price wars act as punishment. We use the~\citet{calvano2020collusion} setup with $K=15$ actions and $n=2$ unless otherwise stated. In Cournot competition, firms choose quantities under linear inverse demand; overproduction constitutes defection. We report the collusion index $\mathrm{CI}=(\bar{\pi}-\pi^{\text{Nash}})/(\pi^{\text{Mono}}-\pi^{\text{Nash}})$ from \cite{calvano2020collusion}, where $\mathrm{CI}=0$ denotes Nash play and $\mathrm{CI}=1$ full collusion. Since higher collusive profits in repeated oligopoly settings come at the expense of consumer surplus, increases in CI correspond to decreases in consumer welfare~\citep{tirole1988}. A summary of the experiments is in Appendix \ref{app:exp}.

\textbf{Defection criterion and belief injection.} CURB requires a defection indicator $\phi_i$ to partition each agent's action history into ``cooperative'' and ``defective'' subsets. We use a simple online instantiation: an action is flagged as defective if it falls on the deviator side of the rolling-history median (below the median for Bertrand prices, above the median for Cournot
quantities). Belief injection is implemented by periodically writing synthetic transitions into each agent's update target where that agent unilaterally defects against cooperative opponents and is not
retaliated against, and is rewarded with the analytic profit at that synthetic joint outcome. In tabular settings the synthetic transition triggers a direct $Q$-update, in DQN it is pushed into
the per-policy replay buffer.


\begin{figure}[t]
  \centering
  \begin{minipage}[t]{0.49\textwidth}
    \centering
    \includegraphics[width=\linewidth, height=1.05in]{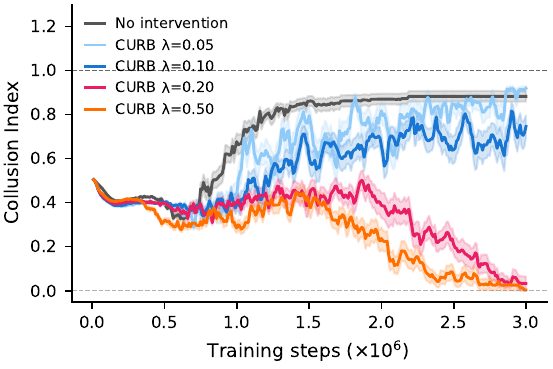}
    \\ {\small (a) $\lambda$ ablation}
  \end{minipage}\hfill
  \begin{minipage}[t]{0.49\textwidth}
    \centering
    \includegraphics[width=\linewidth, height=1.05in]{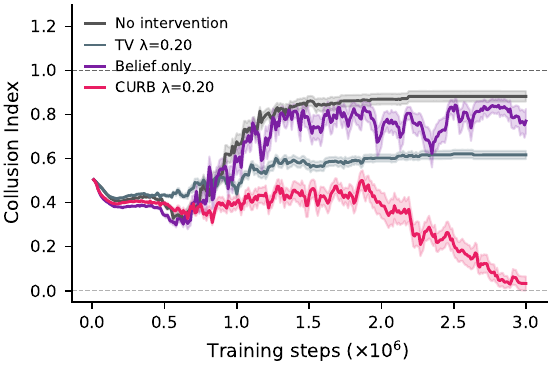}
    \\ {\small (b) Component ablation at $\lambda{=}0.2$}
  \end{minipage}
  \caption{CURB on Bertrand Competition $n{=}2$}
  \label{fig:calvano_ablation}
\end{figure}

\vspace{-2mm}
\paragraph{Bertrand (tabular) ablations.}
\begin{enumerate}[leftmargin=*,noitemsep]
    \item \textbf{Penalty strength $\lambda$ sweep.} Figure~\ref{fig:calvano_ablation}(a) shows a clear threshold effect. $\lambda=0.05$ behaves similarly to the no-intervention baseline, $\lambda=0.10$ partially reduces collusion, and $\lambda \in \{0.2,0.5\}$ drives CI close to Nash level.
    
    \item \textbf{Component ablation.} Figure~\ref{fig:calvano_ablation}(b) compares TV shaping alone, belief injection alone, and full CURB at $\lambda=0.2$. TV-only partially reduces collusion, belief-only remains near baseline, and only the combined intervention drives CI near zero.
    
    \item \textbf{Platform-design baselines.} We compare CURB against Price-Directed Prominence (PDP) and Dynamic PDP ~\citep{johnson2023}, interventions in which a platform controls which seller is shown to consumers based on current price (PDP) as well as price history (DPDP). Because PDP and DPDP require a centralized platform while CURB modifies only the agents' update rule, we use a modified protocol: agents are trained under the platform rule, then evaluated in the full Bertrand environment, asking whether the platform-trained policy retains competitive behavior in the full Bertrand game. Table \ref{tab:baseline-comparison} shows all three substantially reduce collusion relative to baseline, with CURB achieving the lowest CI. Unlike PDP/DPDP, which induce asymmetric pricing, CURB converges symmetrically near the Nash price. This indicates that CURB mitigates collusion in a competitively neutral way that does not advantage one seller over the other. Further, by agents having more symmetry in prices consumers have more variety to choose from.
    
    \item \textbf{$n=3$ extension.}  We test whether CURB scales beyond duopoly. Figure~\ref{fig:n3-and-divergence}(a)
    shows the CI of the baseline reaches a substantially collusive equilibrium. CURB $\lambda{=}0.2$ partially reduces collusion. CURB $\lambda=0.5$ drives the CI near zero for $n=3$.
    
    \item \textbf{Divergence ablation.} We chose TV distance for CURB due to its simplicity. However, from an information theoretic perspective it is natural to ask whether the TV penalty can be replaced by alternative divergence measures. We compare TV against Jensen--Shannon Divergence (JSD) at matched $\lambda$, using two conventions: JSD-norm rescales to $[0,1]$ to match TV's range,
    JSD-raw keeps its native range $[0, \ln 2]$. Figure ~\ref{fig:n3-and-divergence}(b) shows that all three metrics qualitatively reduce collusion below baseline, but at any
    given $\lambda$, TV $>$ JSD-norm $>$ JSD-raw in mitigation strength. This suggests the CURB framework may extend beyond TV distance to other divergence measures.
    \item \textbf{Forced-defection diagnostic.} We manually force unilateral defections in converged rollouts and measure opponent responses; results are reported in Appendix~\ref{app:res}.

\end{enumerate}

\begin{table}[t]
  \centering
  \scriptsize
  \setlength{\tabcolsep}{5pt}
  \begin{tabular}{lccc}
    \toprule
    Condition          & $p_0$              & $p_1$              & CI \\
    \midrule
    No intervention    & $1.8142 \pm 0.0618$ & $1.8239 \pm 0.0669$ & $0.8824 \pm 0.1067$ \\
    PDP                & $1.4464 \pm 0.0213$ & $1.7198 \pm 0.1266$ & $0.0706 \pm 0.0786$ \\
    DPDP               & $1.5164 \pm 0.1171$ & $1.5671 \pm 0.1317$ & $0.0426 \pm 0.0408$ \\
    CURB $\lambda{=}0.2$ & $1.4729 \pm 0.0000$ & $1.4748 \pm 0.0082$ & $0.0029 \pm 0.0124$ \\
    CURB $\lambda{=}0.5$ & $1.4729 \pm 0.0000$ & $1.4748 \pm 0.0082$ & $0.0020 \pm 0.0086$ \\
    \bottomrule
  \end{tabular}
  \caption{Baseline comparison on Bertrand Competition $n{=}2$.}
  \label{tab:baseline-comparison}
\end{table}

\begin{figure}[t]
  \centering
  \begin{minipage}[c]{0.49\textwidth}
    \centering
    \includegraphics[width=\linewidth, height=1.05in]{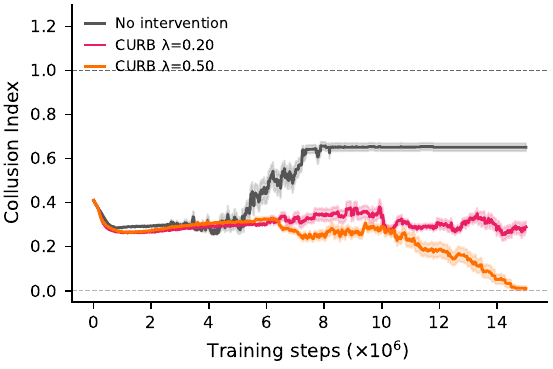}
    \\[0.4em] {\small (a) Bertrand Competition $n{=}3$}
  \end{minipage}\hfill
  \begin{minipage}[c]{0.49\textwidth}
    \centering
    \includegraphics[width=\linewidth, height=1.05in]{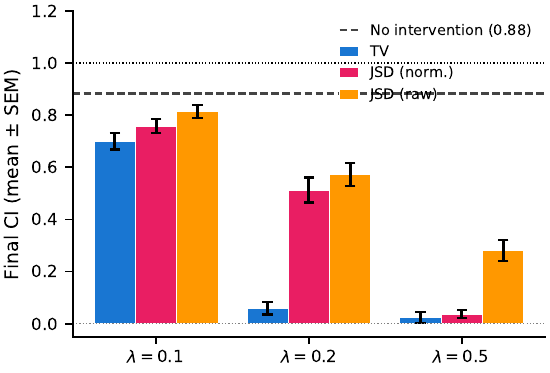}
    \\[0.4em] {\small (b) TV vs JSD divergence ablation, $n{=}2$}
  \end{minipage}
  \caption{Left: CURB scales to \(n=3\) agents. Right: TV outperforms JSD variants at matched \(\lambda\).}
  \label{fig:n3-and-divergence}
\end{figure}

\textbf{Cournot (tabular).}
We compare CURB against no intervention and the imperfect-monitoring modification of \citet{calvano2021}, in which agents observe a noisy version of their opponents' quantities rather than the exact joint action. Figure \ref{fig:cournot-and-dqn}(a) shows the CI over training for the no-intervention baseline, imperfect monitoring \cite{calvano2021}, and CURB at
$\lambda \in \{0.1, 0.2, 0.5\}$. Imperfect monitoring partially reduces collusion, consistent with prior literature \cite{calvano2021}. CURB $\lambda{=}0.5$ drives
the CI to zero and outperforms imperfect monitoring. This demonstrates that CURB transfers across multiple repeated-game environments.

\begin{figure}[t]
  \centering
  \begin{minipage}[c]{0.49\textwidth}
    \centering
    \includegraphics[width=\linewidth, height=1.25in]{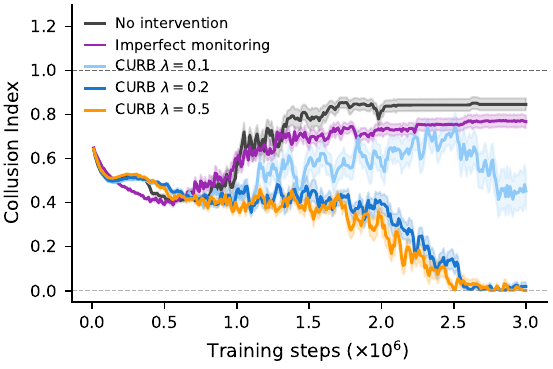}
    \\[0.4em] {\small (a) Cournot Competition $n{=}2$}
  \end{minipage}\hfill
  \begin{minipage}[c]{0.49\textwidth}
    \centering
    \includegraphics[width=\linewidth, height=1.25in]{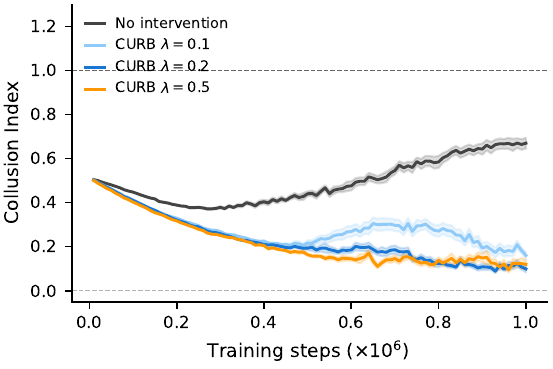}
    \\[0.4em] {\small (b) DQN-Bertrand Competition $n{=}2$}
  \end{minipage}
  \caption{CURB transfers to Cournot competition and DQN.}
  \label{fig:cournot-and-dqn}
\end{figure}

\textbf{Bertrand (DQN)}
We replace tabular Q-learning with independent DQNs using 3-layer MLPs and replay buffers. Figure~\ref{fig:cournot-and-dqn}(b) shows that all CURB settings substantially reduce collusion relative to baseline, indicating the mechanism transfers beyond tabular learning.


\vspace{-3mm}
\section{Discussion $\&$ Conclusion}
\vspace{-3mm}
\label{sec:dandc}

The defection criterion $\phi_i$ introduces a tunable parameter that, while requiring domain knowledge about what constitutes cooperation and defection, also provides a useful degree of design flexibility. In many settings, certain joint action profiles may be cooperative yet welfare-neutral, while others are cooperative in appearance but harmful to third parties. Because $\phi_i$ partitions the action space explicitly, a designer can target specifically the subset of coordinated behaviors that reduce third-party welfare while grouping all remaining actions. This selectivity is a feature of the framework: rather than penalizing all history-dependent behavior indiscriminately, CURB can be tuned to intervene only against the coordination patterns that cause harm.

\textbf{Limitations.} Our formal guarantees apply only to two-player collusion; extending the analysis to n > 2 remains open. Belief injection is empirically effective but currently lacks formal guarantees. The framework targets collusion sustained by punishment dynamics, which prior literature has identified as a prominent mechanism by which algorithmic collusion arises. Whether CURB's detection signal would remain informative in settings where collusion may be sustained through alternative mechanisms is an open question.



\textbf{Conclusion.}
We formalized the connection between algorithmic collusion and Simple Penal Codes, showing that non-trivial punishment strategies induce a detectable TV-distance signal in learned policies. Building on this connection, we proposed CURB, a framework that suppresses punishment-based collusion. Across Bertrand and Cournot competition under both tabular Q-learning and DQN, CURB consistently drove the Collusion Index near zero without modifying the game structure or requiring centralized control. As the risk of algorithmic collusion grows across markets and multi-agent systems, the need for principled interventions becomes increasingly pressing. By grounding mitigation in the equilibrium structure of repeated games rather than in the specifics of any one market or platform, CURB offers a step toward general-purpose tools for safeguarding competitive outcomes wherever autonomous agents learn to interact.


\bibliographystyle{plainnat}
\bibliography{references}


\newpage
\appendix
\section{Q-Learning}
\label{app:q-learning}

Each agent $i$ maintains a Q-function $Q_i : \mathcal{S} \times \mathcal{A}_i 
\rightarrow \mathbb{R}$ and updates it according to Equation~\ref{eq:qlearning}.
\begin{equation}
Q_i(s_t, a_i^t) \leftarrow Q_i(s_t, a_i^t) + \alpha \left[ 
r_i^t + \delta \max_{a' \in \mathcal{A}_i} Q_i(s_{t+1}, a') 
- Q_i(s_t, a_i^t) \right]
\label{eq:qlearning}
\end{equation}
In Equation~\ref{eq:qlearning}, $\alpha \in (0,1)$ is the learning rate. The greedy policy 
derived from $Q_i$ is $\pi_i(s) = \arg\max_{a \in \mathcal{A}_i} Q_i(s, a)$. 
During training agents act $\varepsilon$-greedily, selecting a random 
action with probability $\varepsilon$ and the greedy action otherwise.

\section{Example Instantiations of the Defection Criterion}
\label{app:dc-examples}
\begin{enumerate}
    \item \textbf{Path-consistent} (when $Q^0$ is known):
$$\phi_i(s, a) = \mathbf{1}\left[a \notin 
\mathcal{C}_i(Q^0, t)\right].$$
Directly tied to Abreu's definition of deviation. 
Preferred when $Q^0$ is observable or estimable 
from a baseline run of standard Q-learning. 
    \item \textbf{Nash-deviation} (when stage game 
Nash payoffs are known):
$$\phi_i(s, a) = \mathbf{1}\!\left[
\max_{a_{-i} \in \mathcal{A}_{-i}} 
r_i(a, a_{-i}) \leq r_i^{Nash,G}\right],$$
where $r_i^{Nash,G}$ denotes the Nash equilibrium 
payoff of the stage game $G$.
    \item \textbf{Relative reward-percentile}:
$$\phi_i(s_t, a_i^t) = \mathbf{1}\!\left[
\rho_i^{t-1} \geq \hat{Q}_{1-p}\!\left(
\rho_i^{(1:t)}\right)\right],$$
where
$$\rho_i^t = r_i^t - \frac{1}{n-1}
\sum_{j \neq i} r_j^t$$
is agent $i$'s reward advantage over opponents 
at $t$, and $\hat{Q}_{1-p}(\rho_i^{(1:t)})$ 
is the online $(1-p)$-th percentile of observed 
relative advantages up to $t$. Agent $i$ is classified as having defected when its reward advantage over opponents was unusually large, consistent with 
having captured excess surplus at opponents' 
expense. $p \in (0,1)$ controls the fraction of periods classified 
as defections. 
\end{enumerate}  
\section{Proofs}
\label{app:proofs}

\subsection{Proof of Proposition \ref{prop:tv-detect} and Corollaries}
\label{proof:P2}
 We first establish a lemma characterizing the optimality of punishment paths.
\subsubsection{Punishment path is a best response}
\begin{lemma}
\label{lem:punishment-br}
Let $\sigma(Q^0, Q^i, Q^j)$ be a subgame-perfect equilibrium of $G^\infty(\delta)$ under Assumptions 1--3. Then for any alternative path $\tilde{Q}$ generated by agent $i$ playing an alternative strategy $\tilde{\sigma}_i$ against the other agents' prescribed post-deviation behavior,
\begin{equation}
v_i(Q^i) \ge v_i(\tilde{Q}).
\end{equation}
\end{lemma}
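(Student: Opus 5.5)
The plan is to argue directly from subgame perfection of $\sigma(Q^0, Q^i, Q^j)$, working in the subgame that starts immediately after agent $i$'s single deviation from an ongoing path. By Definition~\ref{def:ssp}, once $i$ deviates singly the profile switches to $Q^i$. In that subgame the other agent $j$ follows its prescribed post-deviation behavior $\sigma_j$, which is exactly the behavior generating $Q^i$. If $i$ also follows $\sigma_i$ from this point, the realized continuation path is $Q^i$.

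The first step is to fix this subgame and note that, since $\sigma$ is a subgame perfect equilibrium (Definition~\ref{def:spc}), the restriction of $\sigma$ to the subgame is a Nash equilibrium of it. In particular, $\sigma_i$ restricted to the subgame is a best response to $\sigma_j$ restricted to the subgame. The second step is to compare against an arbitrary alternative strategy $\tilde\sigma_i$ for $i$, played against $j$'s prescribed post-deviation behavior. The profile $(\tilde\sigma_i,\sigma_j)$ generates a unique path $\tilde Q$, as in the Preliminaries. The Nash property then gives $v_i(Q^i)\ge v_i(\tilde Q)$ directly.

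I would make this airtight using Remark~\ref{rem:osdp}, i.e.\ the one-shot deviation principle, which holds under Assumptions~\ref{ass:bounded}--\ref{ass:ms}. The argument runs as follows.

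\begin{itemize}
\item First take alternatives $\tilde\sigma_i$ that differ from $\sigma_i$ in only finitely many periods. Apply backward induction on the last deviation period $T$: replace $\tilde\sigma_i$ at period $T$ by $\sigma_i$. The one-shot condition, which is the sustainability inequality~\eqref{eq:sustainability} at the relevant path and period, shows this weakly raises $i$'s payoff. Iterating down to the first deviation yields $v_i(Q^i)\ge v_i(\tilde Q)$.
\item For arbitrary $\tilde\sigma_i$, truncate at horizon $T$. Assumption~\ref{ass:bounded} implies the payoff difference between $\tilde Q$ and its $T$-truncated version is at most $\delta^T(r_{\max}-r_{\min})/(1-\delta)$. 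Letting $T\to\infty$ gives the inequality.
\end{itemize}

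The main subtlety is that $\tilde Q$ is produced by $j$'s prescribed behavior reacting to $i$'s new actions, not by $j$ mechanically replaying $Q^i$. By rule~2 of Definition~\ref{def:ssp}, further single deviations by $i$ restart $Q^i$. The key point for the one-shot step is that the continuation after any one-period deviation by $i$ is again $Q^i$, so the inequality $\Delta_i\le v_i(Q^{k};t+1)-v_i(Q^i)$ is exactly what is needed to undo each deviation. I expect this bookkeeping of restarts to be the main care point; the rest is standard.
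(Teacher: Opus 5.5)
Your core argument is the paper's proof. You restrict to the subgame after $i$'s single deviation, where $\sigma$ prescribes $Q^i$, and use subgame perfection to get that $\sigma_i$ is a best response to $j$'s prescribed post-deviation behavior, hence $v_i(Q^i) \ge v_i(\tilde{Q})$ for any alternative $\tilde{\sigma}_i$. The one-shot-deviation and truncation layer you add is valid but unnecessary, since subgame perfection already gives optimality against every alternative strategy in that subgame, not only finitely-deviating ones.
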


\textit{proof: } Consider the subgame beginning the period after agent $i$ deviates singly from $Q^0$. By Definition~\ref{def:ssp}, $\sigma$ prescribes the continuation path $Q^i$ with the other agents playing their $Q^i$-roles. By subgame perfection (Definition~\ref{def:spc}), $\sigma_i$ restricted to this subgame is a best response to $\sigma_{-i}$. Since $\tilde{\sigma}_i$ is a feasible alternative and $\sigma_i$ is optimal, $v_i(Q^i) \ge v_i(\tilde{Q})$.

\subsubsection{Proof of Proposition \ref{prop:tv-detect}}
\label{proof:P2}

Define $\Delta^*_t := v_i(Q^0; t+1) - v_i(Q^i)$ for each $t \ge 1$. By Proposition~\ref{prop:abreu}, the sustainability condition~\eqref{eq:sustainability} applied to agent $i$ at period $t$ states:
\begin{equation}
\Delta_i(q^{*,t}_i, q^0_{-i}(t)) \le \Delta^*_t, \qquad \text{where } q^{*,t}_i \in \arg\max_{a \in A_i} r_i(a, q^0_{-i}(t)). \label{eq:sust-at-t}
\end{equation}
Since $q^{*,t}_i$ is a stage-game best response and $q^0_i(t) \in A_i$ is feasible,
\begin{equation}
\Delta_i(q^{*,t}_i, q^0_{-i}(t)) = r_i(q^{*,t}_i, q^0_{-i}(t)) - r_i(q^0(t)) \ge 0. \label{eq:dev-nonneg-final}
\end{equation}
Hence $\Delta^*_t \ge 0$ for all $t$.
 
Suppose for contradiction that $\Delta^*_t = 0$ for \emph{every} $t$. Then \eqref{eq:sust-at-t} and \eqref{eq:dev-nonneg-final} together force $\Delta_i(q^{*,t}_i, q^0_{-i}(t)) = 0$ for every $t$, which means $q^0_i(t) \in \arg\max_{a \in A_i} r_i(a, q^0_{-i}(t))$ for every $t$. By symmetry (applying the same argument to agent $j$), $q^0_j(t) \in \arg\max_{a \in A_j} r_j(a, q^0_{-j}(t))$ for every $t$, so $Q^0$ is a stage-game Nash path -- contradicting the non-Nash hypothesis on $Q^0$.
 
Therefore there exists at least one period $t^*$ with $\Delta^*_{t^*} > 0$. Fix such a $t^*$ and define $\Delta^* := \Delta^*_{t^*} > 0$.

Define the intermediate path $\tilde{Q}$ on periods $t \ge t^*+1$ as follows:
\begin{itemize}
\item Agent $i$ plays $q^0_i(t)$ (the cooperative on-path action);
\item Agent $j$ plays according to the punishment-phase distribution $\pi_j(\cdot \mid a^{t-1}_i \in D_i)$.
\end{itemize}
By construction, $\tilde{Q}$ is the path that results if agent $i$ deviates at $t^*$ -- triggering the punishment phase -- but then, contrary to optimality, continues to play the cooperative action $q^0_i(\cdot)$ while $j$ punishes. This is a feasible alternative strategy for $i$ in the post-deviation subgame, so Lemma~\ref{lem:punishment-br} applies:
\begin{equation}
v_i(Q^i) \ge v_i(\tilde{Q}). \label{eq:lemma-applied}
\end{equation}
 
Decompose:
\begin{equation}
\Delta^* = v_i(Q^0; t^*+1) - v_i(Q^i) = \underbrace{\bigl[ v_i(Q^0; t^*+1) - v_i(\tilde{Q}; t^*+1) \bigr]}_{\text{pure }j\text{-effect}} + \underbrace{\bigl[ v_i(\tilde{Q}; t^*+1) - v_i(Q^i) \bigr]}_{\le 0 \text{ by } \eqref{eq:lemma-applied}}.
\end{equation}
Hence
\begin{equation}
v_i(Q^0; t^*+1) - v_i(\tilde{Q}; t^*+1) \ge \Delta^* > 0. \label{eq:pure-j-lower-final}
\end{equation}

On path $Q^0$, agent $i$ plays $q^0_i(\cdot)$ throughout, and agent $j$ plays its cooperation-phase distribution $\pi_j(\cdot \mid C_i)$. On path $\tilde{Q}$, agent $i$ still plays $q^0_i(\cdot)$ (by construction of $\tilde{Q}$), and agent $j$ plays its punishment-phase distribution $\pi_j(\cdot \mid D_i)$. Thus $Q^0$ and $\tilde{Q}$ differ only in $j$'s action distribution.
 
For each $s \ge 1$, define $h_s : A_j \to [r_{\min}, r_{\max}]$ by
\[
h_s(a_j) := r_i(q^0_i(t^*+s), a_j).
\]
Then
\begin{equation}
v_i(Q^0; t^*+1) - v_i(\tilde{Q}; t^*+1) = \sum_{s=1}^\infty \delta^{s-1} \Bigl( \mathbb{E}_{a_j \sim \pi_j(\cdot | C_i)}[h_s(a_j)] - \mathbb{E}_{a_j \sim \pi_j(\cdot | D_i)}[h_s(a_j)] \Bigr). \label{eq:j-effect-final}
\end{equation}

For any function $h : A_j \to [r_{\min}, r_{\max}]$ and any distributions $\mu, \nu$ on $A_j$,
\begin{equation}
|\mathbb{E}_\mu[h] - \mathbb{E}_\nu[h]| \le (r_{\max} - r_{\min}) \cdot \|\mu - \nu\|_{TV}. \label{eq:tv-variational-final}
\end{equation}
(For any constant $c$, $\mathbb{E}_\mu[h] - \mathbb{E}_\nu[h] = \sum_{a} (h(a) - c)(\mu(a) - \nu(a))$. Choose $c = (r_{\max} + r_{\min})/2$ so $|h(a) - c| \le (r_{\max} - r_{\min})/2$; then $|\mathbb{E}_\mu[h] - \mathbb{E}_\nu[h]| \le \tfrac{r_{\max} - r_{\min}}{2} \sum_a |\mu(a) - \nu(a)| = (r_{\max} - r_{\min}) \|\mu - \nu\|_{TV}$.)
 
Let $\mathrm{TV} := \|\pi_j(\cdot | D_i) - \pi_j(\cdot | C_i)\|_{TV}$. Applying \eqref{eq:tv-variational-final} termwise to \eqref{eq:j-effect-final} with $\mu = \pi_j(\cdot | C_i)$, $\nu = \pi_j(\cdot | D_i)$:
\begin{align}
\bigl| v_i(Q^0; t^*+1) - v_i(\tilde{Q}; t^*+1) \bigr|
&\le \sum_{s=1}^\infty \delta^{s-1} \cdot (r_{\max} - r_{\min}) \cdot \mathrm{TV} \\
&= \frac{r_{\max} - r_{\min}}{1 - \delta} \cdot \mathrm{TV}. \label{eq:tv-upper-final}
\end{align}
 
Since the left-hand side of \eqref{eq:tv-upper-final} is positive by \eqref{eq:pure-j-lower-final}, we may drop the absolute value and combine with \eqref{eq:pure-j-lower-final}:
\begin{equation}
\Delta^* \le \frac{r_{\max} - r_{\min}}{1 - \delta} \cdot \mathrm{TV}.
\end{equation}

\begin{equation}
\mathrm{TV} \ge \frac{(1 - \delta) \Delta^*}{r_{\max} - r_{\min}} = \varepsilon^*(\delta) > 0.
\end{equation}
 
Since $\Delta^* > 0$ and the payoff range $r_{\max} - r_{\min}$ is finite by Assumption~1, $\varepsilon^*(\delta) \in (0, 1]$.

\subsubsection{Proof of Corollary \ref{cor:prop2}}
\label{proof:C1}

Suppose
\[
\left\| \pi_j(\cdot \mid a_i^{t-1} \in D_i(Q_0,t)) - \pi_j(\cdot \mid a_i^{t-1} \in C_i(Q_0,t)) \right\|_{TV} = 0.
\]

Since $\varepsilon^*(\delta) \in (0,1]$, we have $0 < \varepsilon^*(\delta)$, so the hypothesis gives
\[
\left\| \pi_j(\cdot \mid D_i(Q_0,t)) - \pi_j(\cdot \mid C_i(Q_0,t)) \right\|_{TV} < \varepsilon^*(\delta).
\]
By the contrapositive of Proposition 2, there exist no paths $Q_i, Q_j \in \mathcal{A}^\infty$ such that $\sigma(Q_0, Q_i, Q_j)$ is a non-trivial SPC of $G^\infty(\delta)$ with $Q_i$ induced by $\pi_j$'s post-deviation behavior. In particular, $\pi_j$ cannot implement any non-trivial punishment path against agent $i$.

Because $\pi_j$ is a \emph{stationary} policy, its action distribution at any period $t$ depends only on the conditioning event $a_i^{t-1} \in D_i(Q_0,t)$ or $a_i^{t-1} \in C_i(Q_0,t)$, not on $t$ itself or on any other feature of the history. The TV distance vanishing means
\[
\pi_j(\cdot \mid a_i^{t-1} \in D_i(Q_0,t)) = \pi_j(\cdot \mid a_i^{t-1} \in C_i(Q_0,t)) \quad \text{for all } t,
\]
so $\pi_j$'s action distribution is \emph{identical} regardless of whether agent $i$ has defected. Consequently, $j$'s play following any history in $D_i(Q_0,t)$ is statistically indistinguishable from its play along the cooperative path itself, so $j$ takes no action that depends on $i$'s deviation. Since the induced path $Q_i$ is generated entirely by best-response dynamics against $\pi_j$, and $\pi_j$ exhibits no history-dependence between the cooperative and defection partitions, the induced continuation play cannot diverge from the on-path behavior under $Q_0$. Hence $Q_i = Q_0$.

\subsubsection{Proof of Corollary \ref{cor:alpha-precise}}
\label{proof:C2}

By Proposition~\ref{prop:tv-detect}(i) applied to the path-consistent partition, $\| \pi^D_j - \pi^C_j \|_{TV} \ge \varepsilon^*(\delta)$.
 
By the simple-strategy-profile structure (Definition~\ref{def:ssp}), $\pi_j$'s action distribution at period $t$ depends only on the current phase, not on the specific classification $\phi_i(s_t, a^t_i)$ of the current action beyond its role in phase triggering. Formally, for any event $E$:
\begin{align}
\pi_j(\cdot \mid E, \text{cooperation phase}) &= \pi^C_j, \\
\pi_j(\cdot \mid E, \text{punishment phase}) &= \pi^D_j.
\end{align}
Hence, by the law of total probability,
\begin{align}
\pi_j(\cdot \mid \phi_i = 1) &= P(\text{punishment phase} \mid \phi_i = 1) \cdot \pi^D_j + P(\text{cooperation phase} \mid \phi_i = 1) \cdot \pi^C_j \\
&= (1 - w_D)\, \pi^D_j + w_D\, \pi^C_j,
\end{align}
where $w_D := P(\text{cooperation phase} \mid \phi_i = 1) \le \alpha$ by Assumption~\ref{ass:precision}. Similarly,
\[
\pi_j(\cdot \mid \phi_i = 0) = (1 - w_C)\, \pi^C_j + w_C\, \pi^D_j, \qquad w_C \le \alpha.
\]
 
Bound each distance to the true phase-conditional:
\begin{align}
\| \pi_j(\cdot \mid \phi_i = 1) - \pi^D_j \|_{TV} &= w_D \cdot \| \pi^C_j - \pi^D_j \|_{TV} \le w_D \le \alpha, \\
\| \pi_j(\cdot \mid \phi_i = 0) - \pi^C_j \|_{TV} &= w_C \cdot \| \pi^D_j - \pi^C_j \|_{TV} \le w_C \le \alpha.
\end{align}
(As $\| \pi^D_j - \pi^C_j \|_{TV} \le 1$.)
 
By the triangle inequality:
\begin{align}
\| \pi_j(\cdot \mid \phi_i = 1) - \pi_j(\cdot \mid \phi_i = 0) \|_{TV}
&\ge \| \pi^D_j - \pi^C_j \|_{TV} - \| \pi_j(\cdot \mid \phi_i = 1) - \pi^D_j \|_{TV} - \| \pi_j(\cdot \mid \phi_i = 0) - \pi^C_j \|_{TV} \\
&\ge \varepsilon^*(\delta) - 2\alpha.
\end{align}
Since $\alpha < \varepsilon^*(\delta)/2$, the right-hand side is strictly positive.

\subsection{Proof of Proposition \ref{prop:invariance-no-spc}}
\label{proof:P3}

\paragraph{($\Rightarrow$)}
 
Suppose $\| \pi_j(\cdot \mid \phi_i = 1) - \pi_j(\cdot \mid \phi_i = 0) \|_{TV} = 0$, and suppose $\sigma(Q^0, Q^i, Q^j)$ is any SPC consistent with $\pi_j$.
 
By the phase-conditional mixture identity (proof of Corollary~\ref{cor:alpha-precise}),
\begin{align}
\pi_j(\cdot \mid \phi_i = 1) &= (1 - w_D)\, \pi^D_j + w_D\, \pi^C_j, \\
\pi_j(\cdot \mid \phi_i = 0) &= (1 - w_C)\, \pi^C_j + w_C\, \pi^D_j,
\end{align}
where $w_D, w_C \in [0, \alpha]$. Subtracting and taking TV distance:
\begin{equation}
\| \pi_j(\cdot \mid \phi_i = 1) - \pi_j(\cdot \mid \phi_i = 0) \|_{TV} = (1 - w_D - w_C) \cdot \| \pi^D_j - \pi^C_j \|_{TV}.
\end{equation}
Since $w_D, w_C \le \alpha < 1/2$, we have $1 - w_D - w_C > 0$. The hypothesis forces $\| \pi^D_j - \pi^C_j \|_{TV} = 0$.
 
Apply Corollary~\ref{cor:prop2}: $\| \pi^D_j - \pi^C_j \|_{TV} = 0$ implies $Q^i = Q^0$. By symmetry (interchanging $i$ and $j$ in Corollary~\ref{cor:prop2}, $Q^j = Q^0$. Hence $Q^0 = Q^i = Q^j$, and the SPC is trivial.
 
\paragraph{($\Leftarrow$)}
 
Suppose $\sigma(Q, Q, Q)$ is a trivial SPC consistent with $\pi_j$. Under Definition~\ref{def:ssp}, the simple strategy profile prescribes path $Q$ regardless of the phase: both the cooperation phase and the punishment phase yield the same prescribed actions. Hence agent $j$'s action distribution is determined by $Q$ alone, independent of $\phi_i$. Under (T2) the policy $\pi_j$ is stationary and the path $Q$ corresponds to a fixed mapping $S \to \Delta(A_j)$ that does not condition on $\phi_i$ beyond what $Q$ itself prescribes.
 
Hence $\pi^D_j = \pi^C_j$, and by the mixture identity above, $\| \pi_j(\cdot \mid \phi_i = 1) - \pi_j(\cdot \mid \phi_i = 0) \|_{TV} = 0$. $\square$

\subsection{Proof of Proposition \ref{prop:shaping}}
\label{proof:P4}

Let $(\pi^\star_j, \pi^\star_{-j})$ be a greedy-policy fixed point of the shaped Q-learning dynamics that implements an SPC $\sigma(Q^0, Q^i, Q^j)$.

Suppose for contradiction that
\begin{equation}
\mathrm{TV}^\star_j := \big\| \pi^\star_j(\cdot \mid \phi_i = 1) - \pi^\star_j(\cdot \mid \phi_i = 0) \big\|_{TV} > 0. \label{eq:tau-pos}
\end{equation}

By Proposition~\ref{prop:invariance-no-spc}, conditional non-invariance implies the SPC is non-trivial: $Q^0 \ne Q^i$ or $Q^j \ne Q^0$. Without loss of generality, assume $Q^0 \ne Q^i$ (the case $Q^j \ne Q^0$ follows by symmetry, swapping the roles of $i$ and $j$).

Apply Corollary~\ref{cor:alpha-precise} to get the following TV bound:
\begin{equation}
\mathrm{TV}^\star_j \ge \varepsilon^*(\delta) - 2\alpha > 0. \label{eq:tau-lower}
\end{equation}

Since $\pi^\star_j$ is stationary, $\mathrm{TV}^\star_j$ is independent of the current state. Let $\bar{Q}_j$ denote the Q-function of $\pi^\star_j$ against $\pi^\star_{-j}$ under the \emph{unshaped} reward $r_j$. Both $Q^\star_j$ and $\bar{Q}_j$ satisfy Bellman equations under the same policies; their right-hand sides differ only by the constant shaping term $-\lambda \mathrm{TV}^\star_j$. Iterating the recursion contributes $-\lambda \mathrm{TV}^\star_j / (1-\delta)$, giving
\begin{equation}
Q^\star_j(s, a) = \bar{Q}_j(s, a) - \frac{\lambda \cdot \mathrm{TV}^\star_j}{1 - \delta}, \qquad \forall (s, a) \in S \times A_j. \label{eq:Q-decompose}
\end{equation}
Since the penalty is constant across $(s, a)$, $\arg\max_a Q^\star_j(s, a) = \arg\max_a \bar{Q}_j(s, a)$, so $\pi^\star_j$ is also greedy with respect to $\bar{Q}_j$.

Recall that the state $s_t$ encodes $a^{t-1}_i$. Define $S_D := \{s : \phi_i(s) = 1\}$ and $S_C := \{s : \phi_i(s) = 0\}$. For each $s \in S$, let $s_C(s) \in S_C$ denote the state obtained from $s$ by replacing the $i^{th}$ coordinate with a value consistent with cooperation. Note that for $s \in S_C$, $s_C(s) = s$.
 
Define
\begin{equation}
\pi'_j(s) := \pi^\star_j(s_C(s)) \qquad \forall s \in S. \label{eq:pi-prime}
\end{equation}
By construction, $\pi'_j(s_D) = \pi'_j(s_C(s_D))$ for all $s_D \in S_D$, so $\pi'_j$ is conditionally invariant: $\| \pi'_j(\cdot \mid \phi_i = 1) - \pi'_j(\cdot \mid \phi_i = 0) \|_{TV} = 0$ i.e., $\mathrm{TV}'_j = 0$.

Let $\bar{Q}'_j$ denote the unshaped Q-function of $\pi'_j$ against $\pi^\star_{-j}$. By Assumption~1, both unshaped Q-functions take values in $[r_{\min}/(1-\delta), r_{\max}/(1-\delta)]$, so
\begin{equation}
\bar{Q}_j(s, a) - \bar{Q}'_j(s, a) \le \frac{r_{\max} - r_{\min}}{1 - \delta}, \qquad \forall (s, a). \label{eq:unshaped-gap}
\end{equation}
 
The shaped Q-values:
\begin{align}
Q^\star_j(s, a) &= \bar{Q}_j(s, a) - \frac{\lambda \cdot \mathrm{TV}^\star_j}{1 - \delta} \le \bar{Q}_j(s, a) - \frac{\lambda(\varepsilon^*(\delta) - 2\alpha)}{1 - \delta}, \label{eq:Qstar-shape} \\
Q'_j(s, a) &= \bar{Q}'_j(s, a) - \frac{\lambda \cdot \mathrm{TV}'_j}{1 - \delta} = \bar{Q}'_j(s, a). \label{eq:Qprime-shape}
\end{align}
Subtracting and using~\eqref{eq:unshaped-gap}:
\begin{align}
Q'_j(s, a) - Q^\star_j(s, a)
&\ge \bar{Q}'_j(s, a) - \bar{Q}_j(s, a) + \frac{\lambda(\varepsilon^*(\delta) - 2\alpha)}{1 - \delta} \\
&\ge -\frac{r_{\max} - r_{\min}}{1 - \delta} + \frac{\lambda(\varepsilon^*(\delta) - 2\alpha)}{1 - \delta} \\
&= \frac{\lambda(\varepsilon^*(\delta) - 2\alpha) - (r_{\max} - r_{\min})}{1 - \delta}. \label{eq:gap}
\end{align}

For $\lambda > \lambda^* = (r_{\max} - r_{\min}) / (\varepsilon^*(\delta) - 2\alpha)$, the numerator of~\eqref{eq:gap} is strictly positive:
\begin{equation}
Q'_j(s, a) > Q^\star_j(s, a) \qquad \forall (s, a) \in S \times A_j. \label{eq:strict-dom}
\end{equation}
Recall that $Q'_j$ is the shaped Q-function of $\pi'_j$ against $\pi^\star_{-j}$, while $Q^\star_j$ is the shaped Q-function of $\pi^\star_j$ against $\pi^\star_{-j}$. Inequality~\eqref{eq:strict-dom} therefore states that $\pi'_j$ achieves strictly higher shaped value than $\pi^\star_j$ at every state-action pair, against the same opponent.
 
Define the shaped value functions $V^{\pi^\star_j}_j(s) := \mathbb{E}_{a \sim \pi^\star_j(s)}[Q^\star_j(s, a)]$ and $V^{\pi'_j}_j(s) := \mathbb{E}_{a \sim \pi'_j(s)}[Q'_j(s, a)]$. From~\eqref{eq:strict-dom},
\begin{equation}
V^{\pi'_j}_j(s) > V^{\pi^\star_j}_j(s) \qquad \forall s \in S. \label{eq:value-dom}
\end{equation}
 
Q-learning at a fixed point against a stationary opponent $\pi^\star_{-j}$ converges (when it converges) to the optimal policy under shaped rewards: $\pi^\star_j$ must satisfy $V^{\pi^\star_j}_j(s) = \max_{\pi_j} V^{\pi_j}_j(s)$ for every $s$, where the maximum ranges over all stationary policies. But $\pi'_j$ is a stationary policy that achieves strictly higher value at every state by~\eqref{eq:value-dom}, contradicting the optimality of $\pi^\star_j$.
 
Therefore the supposition $\mathrm{TV}^\star_j > 0$ must be false. Hence $\mathrm{TV}^\star_j = 0$.

\subsection{Proof of Proposition \ref{prop:nash-stability}}
\label{proof:P5}

Let $\pi^{\mathrm{Nash}} = (\pi^{\mathrm{Nash}}_1, \ldots, \pi^{\mathrm{Nash}}_n)$ denote the stationary stage-game Nash profile, where each $\pi^{\mathrm{Nash}}_j$ plays a stage-game Nash action independent of state. By Observation~\ref{obs:nash-tv-zero},
\[
\big\| \pi^{\mathrm{Nash}}_j(\cdot \mid a^{t-1}_i \in D^\phi_i) - \pi^{\mathrm{Nash}}_j(\cdot \mid a^{t-1}_i \in C^\phi_i) \big\|_{TV} = 0
\]
for all $i, j \in N$. Therefore the TV penalty in the shaped reward is identically zero under $\pi^{\mathrm{Nash}}$, and the shaped reward equals the unshaped reward: $r^{\mathrm{shaped}}_j(s, a) = r_j(s, a)$ for all $(s, a)$.
 
Because $\pi^{\mathrm{Nash}}$ is state-independent, the continuation value
\[
V^{\mathrm{Nash}}_j(s) = \sum_{t=0}^\infty \delta^t \, \mathbb{E}\big[r_j(\pi^{\mathrm{Nash}})\big] = \frac{r_j(\pi^{\mathrm{Nash}})}{1-\delta}
\]
does not depend on $s$. Hence the Q-value of action $a$ in state $s$ satisfies
\[
Q^{\mathrm{Nash}}_j(s, a) = r_j\big(s, a, \pi^{\mathrm{Nash}}_{-j}\big) + \delta \cdot V^{\mathrm{Nash}}_j,
\]
and the only $a$-dependent term is the immediate reward. Maximizing over $a$ reduces to the stage-game best-response problem:
\[
\arg\max_{a \in \mathcal{A}_j} Q^{\mathrm{Nash}}_j(s, a) = \arg\max_{a \in \mathcal{A}_j} r_j\big(s, a, \pi^{\mathrm{Nash}}_{-j}\big).
\]
Since $\pi^{\mathrm{Nash}}_j$ is by definition a stage-game best response to $\pi^{\mathrm{Nash}}_{-j}$, we have $\pi^{\mathrm{Nash}}_j(s) \in \arg\max_a Q^{\mathrm{Nash}}_j(s, a)$ for all $s$. Hence $\pi^{\mathrm{Nash}}$ is a greedy-policy fixed point of shaped Q-learning.

\section{Algorithm}
\label{app:algorithm}

\begin{algorithm}[H]
\caption{CURB}
\label{alg:curb}
\begin{algorithmic}[1]
\REQUIRE penalty strength $\lambda > \lambda^*$, history length $W$,
         injection period $k$, injection size $m$,
         defection criterion $\phi_i$
\STATE Initialise $Q_i$ for all $i \in [n]$;
       sliding action history $\mathcal{H}$ of length $W$, initially empty
\FOR{$t = 1, 2, \ldots$}
    \STATE Each agent $i$ selects $a_i^t$ $\varepsilon$-greedily from $Q_i(s_t,\cdot)$
    \STATE Observe rewards $r_i^t$ and next state $s_{t+1}$
    \STATE Push joint action $a^t$ into $\mathcal{H}$ (evicting the oldest entry if $|\mathcal{H}| = W$)
    \FOR{each agent $j \in [n]$}
        \STATE For each $i \neq j$, classify each of $i$'s actions in $\mathcal{H}$ via $\phi_i$ into a defective set $\mathcal{D}_i$ and a cooperative set $\mathcal{C}_i$
        \STATE Compute the lag-1 conditional TV between $j$'s response distributions:
        \begin{equation*}
        \mathrm{TV}_t^{ij} =
          \big\|\,
            \hat\pi_j\!\left(\cdot \,\big|\, a_i^{t-1} \in \mathcal{D}_i\right)
            -
            \hat\pi_j\!\left(\cdot \,\big|\, a_i^{t-1} \in \mathcal{C}_i\right)
          \,\big\|_{\mathrm{TV}}
        \end{equation*}
        \STATE Apply shaped reward
               $\tilde{r}_j^t = r_j^t - \lambda \cdot \max_{i \neq j} \mathrm{TV}_t^{ij}$
        \STATE Update
               $Q_j(s_t, a_j^t) \leftarrow Q_j(s_t, a_j^t) + \alpha\!\left[\,\tilde{r}_j^t + \delta \max_{a'} Q_j(s_{t+1}, a') - Q_j(s_t, a_j^t)\right]$
    \ENDFOR
    \IF{$t \bmod k = 0$}
        \FOR{each agent $j \in [n]$}
            \STATE Inject $m$ synthetic transitions into $j$'s buffer in which $j$ plays a defective action $a_j \in \mathcal{D}_j$ while opponents continue cooperating.
        \ENDFOR
    \ENDIF
\ENDFOR
\end{algorithmic}
\end{algorithm}


\section{Experiments}
\label{app:exp}

This appendix documents the environments, algorithms, hyperparameters, and
training scale used for the experiments reported in Section 6.

\subsection{Environments}

\paragraph{Bertrand competition.}
We use the symmetric Bertrand oligopoly with multinomial-logit demand from
\citet{calvano2020collusion}. Each agent $i$ chooses a price $p_i$ from a
discrete grid of $K=15$ values spanning the stage-game Nash and monopoly
prices with a $10\%$ markup at each end. Demand follows a logit form with
quality $a_i = 2$, price sensitivity $\mu = 0.25$, and zero marginal cost.
We use $n=2$ for the main experiments and $n=3$ for the scaling test.

\paragraph{Cournot competition.}
We use the deterministic Cournot oligopoly from \citet{calvano2021}.
Each agent chooses a quantity $q_i$ from a discrete grid of $K=15$ values
spanning the Nash quantity $q^{\text{Nash}} = a/(n+1)$ and monopoly
quantity $q^{\text{Mono}} = a/(2n)$ with $a = 200$ and $b = 1$. Per-period
profit is $\pi_i = (a - b\sum_j q_j) \cdot q_i$.

\paragraph{Cournot with imperfect monitoring.}
For comparison against the imperfect-monitoring deterrent of
\citet{calvano2021}, we additionally implement the Cournot variant
in which the demand intercept $d_t$ is i.i.d.\ uniform on $\{290, 310\}$ and
agents observe only the realised market price (not opponent quantities). The
state space is the previous-period price discretised into 37 levels.

\paragraph{State encoding.}
Across all environments the state $s_t$ is the previous joint action
$(a_1^{t-1}, \ldots, a_n^{t-1})$ (or the previous price under imperfect
monitoring), giving $|\mathcal{S}| = K^n$ in the standard case.

\paragraph{Reward normalisation.}
All per-period profits are divided by the per-agent monopoly profit so
collusion-index calculations and cross-environment $\lambda$ values are
directly comparable. The collusion index is
$\text{CI} = (\bar{\pi} - \pi^{\text{Nash}})/(\pi^{\text{Mono}} - \pi^{\text{Nash}})$,
with $\text{CI}=0$ at Nash and $\text{CI}=1$ at full collusion.

\subsection{Algorithms}

\paragraph{Tabular Q-learning.}
Each agent maintains a Q-table $Q_i \colon \mathcal{S} \times \mathcal{A}_i \to \mathbb{R}$
updated according to Equation~\ref{eq:qlearning}. Following \citet{calvano2020collusion}, we
initialise $Q_i(s, a) = \mathbb{E}_{a_{-i}}[\pi_i(a, a_{-i})]/(1-\delta)$
averaged over opponent actions, which seeds the table with reasonable
state-independent value estimates.

\paragraph{DQN.}
For the function-approximation experiments we replace the tabular Q-tables
with independent multi-agent DQN: one PyTorch Q-network per agent, with no
weight sharing. Each network is a 3-layer MLP with 32 hidden units per
layer and ReLU activations, trained with Adam, MSE loss, a hard
target-network copy every $C = 100$ steps, and a replay buffer of size
$10^4$. State observations are the previous-period prices, normalised to
$[0,1]$.

\subsection{CURB components}

\paragraph{Defection criterion.}
The TV detector requires a defection indicator $\phi_i$ to partition
agent $i$'s past actions into ``cooperative'' and ``defective'' subsets. We
use the simple median-split: at each timestep, $\phi_i$ flags an action
as defective if it falls on the deviator side of the rolling-history
median (below the median for Bertrand prices; above the median for Cournot
quantities). A punishment-direction filter further rejects spurious
detections by requiring agent $j$'s mean action when $i$ is below the
median to lie below $j$'s mean action when $i$ is above the median.

\paragraph{TV penalty.}
The TV detector operates on a rolling window of $W = 500$ past joint
actions with lag-1 conditioning: agent $j$'s current-period action is
conditioned on agent $i$'s \emph{previous} action, capturing the temporal
causality of punishment. The shaped reward is
$\tilde r_j = r_j - \lambda \cdot \max_{i \neq j} \text{TV}_t^{ij}$, with
$\lambda$ swept over $\{0.05, 0.1, 0.2, 0.5\}$.

\paragraph{Belief injection.}
Every $K_{\text{belief}} = 100$ environment steps we inject
$M_{\text{belief}} = 5$ synthetic transitions per agent into the agent's
update target. Each injected transition models a unilateral deviation by
$j$ against opponents continuing on the cooperative path, with the
analytic stage-game profit as the reward. In the tabular setting this is
a direct $Q$-update; in the DQN setting it is a synthetic entry in the
agent's replay buffer.

\subsection{Hyperparameters}

Table~\ref{tab:hyperparameters} lists the hyperparameters used across all
experiments. All values are recorded in the run\_config.json of each
completed run for reproducibility.

\begin{table}[h]
  \centering
  \small
  \begin{tabular}{lll}
    \toprule
    Parameter                          & Tabular Q              & DQN       \\
    \midrule
    Learning rate $\alpha$              & $0.15$                 & $10^{-3}$ \\
    Discount $\delta$                   & $0.95$                 & $0.95$    \\
    Exploration decay $\beta$           & $4{\times}10^{-6}$ ($n{=}2$), $1.2{\times}10^{-6}$ ($n{=}3$) & $4{\times}10^{-6}$ \\
    Convergence threshold $t_{\text{stable}}$ & $10^5$           & ---       \\
    Replay buffer size                  & ---                    & $10^4$    \\
    Target-network update $C$           & ---                    & $100$     \\
    Hidden units (per MLP layer)        & ---                    & $32$      \\
    Batch size                          & ---                    & $32$      \\
    Action-grid size $K$                & $15$                   & $15$      \\
    TV rolling-window $W$               & $500$                  & $500$     \\
    Belief-injection interval $k$       & $100$                  & $100$     \\
    Belief-injection size $m$           & $5$                    & $5$       \\
    \bottomrule
  \end{tabular}
  \caption{Hyperparameters used across experiments.}
  \label{tab:hyperparameters}
\end{table}

\subsection{Training scale}

Table~\ref{tab:training-scale} summarises the conditions, seed counts, and
training horizon for each experiment.

\begin{table}[h]
  \centering
  \small
  \begin{tabular}{llll}
    \toprule
    Experiment              & Conditions & Seeds & Env steps   \\
    \midrule
    Bertrand $n{=}2$        & 10 (baseline + belief-only + 4 TV-only + 4 CURB) & 20 & $3{\times}10^6$  \\
    Bertrand $n{=}3$        & 3 (baseline + 2 CURB)              & 20 & $2{\times}10^7$  \\
    DQN-Bertrand            & 4 (baseline + 3 CURB)              & 20 & $10^6$           \\
    Cournot $n{=}2$         & 5 (baseline + IM + 3 CURB)         & 20 & $3{\times}10^6$  \\
    Platform-design         & 5 (baseline + PDP + DPDP + 2 CURB) & 20 & $3{\times}10^6$  \\
    JSD ablation            & 10 (baseline + 3 TV + 3 JSD-norm + 3 JSD-raw) & 20 & $3{\times}10^6$ \\
    Forced defection        & 2 (baseline + TV-only)             & 20 & $3{\times}10^6$  \\
    \bottomrule
  \end{tabular}
  \caption{Training scale per experiment. ``IM'' denotes imperfect
  monitoring; ``PDP'' / ``DPDP'' denote the Price-Directed Prominence
  baselines.}
  \label{tab:training-scale}
\end{table}

\subsection{Compute resources}

Tabular Q-learning experiments run entirely on CPU; we use Python
multiprocessing to run seeds in parallel (typically 32 workers per node).
A complete Bertrand $n{=}2$ sweep (200 seed-condition pairs at $3{\times}10^6$
steps each) finishes in roughly 2 hours on a 32-core node. The Bertrand
$n{=}3$ sweep, with $2{\times}10^7$ steps per seed, takes approximately
6--8 hours on the same hardware. The DQN-Bertrand sweep uses a single GPU (A100)
per seed and completes in under 4 hours per node.

\subsection{Reporting}

All plotted curves show the mean across seeds with $\pm 1$ SEM bands. All
tables report the mean $\pm$ sample standard deviation across seeds. We
report final-CI metrics by averaging the per-step CI over the last $10\%$
of training steps to reduce sensitivity to oscillations near convergence.

\section{Supplementary Results}
\label{app:res}
\subsection{Forced-defection test}
\label{res:fdt}
\begin{figure}[H]
  \centering
  \includegraphics[width=\linewidth]{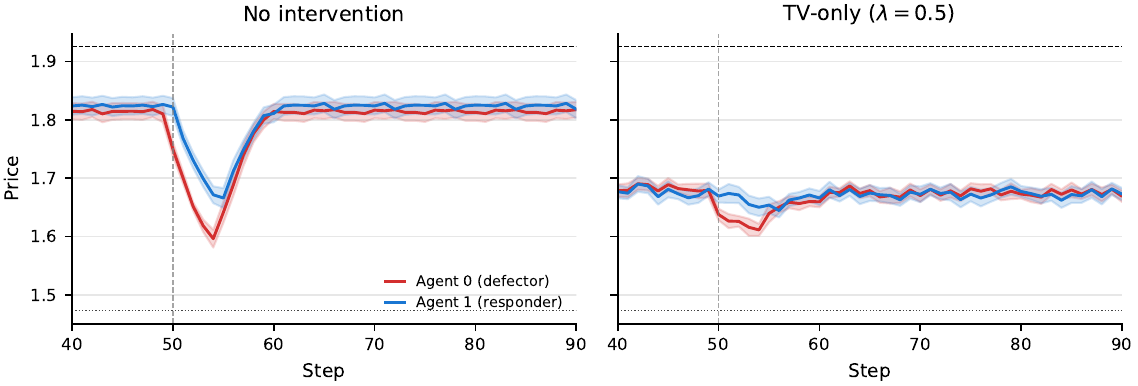}
  \caption{Forced-defection test where agent~0 is forced to defect at step~50.}
  \label{fig:forced-defection}
\end{figure}

To verify that the TV penalty suppresses the punishment response we examine trained agents' behavior when one agent is forced to deviate. Figure~\ref{fig:forced-defection} shows the resulting price trajectories for the no-intervention baseline and the TV-only condition at $\lambda{=}0.5$. Under no intervention, agent~1 follows agent~0 down, a coordinated drop characteristic of an SPC strategy, and both recover within a few steps. Under TV-only, agent~1 does not respond to the deviation, only agent~0 dips and rebounds alone. This indicates that CURB's TV penalty specifically eliminates the punishment response that sustains collusive SPC equilibria.


\end{document}